%

\NeedsTeXFormat{LaTeX2e}

\documentclass{tlp}
\usepackage{mathptmx}

\usepackage{tikz}
\usepackage{calc}
\usetikzlibrary{calc}
\usepackage{svg}
\usepackage{xspace}
\usepackage{color}

\usepackage{helvet}  
\usepackage{courier}  
\usepackage{caption} 

\usepackage[utf8]{inputenc}
\usepackage{booktabs}
\usepackage{amssymb}
\usepackage[switch]{lineno}
\usepackage{amsmath}

\usepackage{subcaption}

\usepackage{listings}
\usepackage{tikz}
\usepackage{multirow}
\usepackage{lscape}
\usepackage{longtable}

\usepackage{mathtools}
\usepackage{cancel}
\usepackage{placeins} 

\usepackage{xcolor}
\usepackage{amsmath}    
\usepackage{amssymb}    
\usepackage{mathtools}  
\usepackage{microtype}  
\usepackage[inline]{enumitem} 
\usepackage{multirow}   
\usepackage{booktabs}   
\usepackage{subcaption} 
\usepackage[ruled,linesnumbered,algosection,noend]{algorithm2e} 
\usepackage{url}

\lstset{
  breaklines=true
}

%

%

\newcommand{\longversion}[1]{}

\DeclareMathOperator{\dom}{dom}

\DeclareMathOperator{\var}{var}

\newcommand{\nag}{\texttt{NaGG}\xspace}
\newcommand{\ngrThree}{\texttt{newground3}\xspace}
\newcommand{\ngr}{\texttt{newground}\xspace}

\newcommand{\clingo}{\texttt{clingo}\xspace}
\newcommand{\gringo}{\texttt{gringo}\xspace}
\newcommand{\idlv}{\texttt{idlv}\xspace}
\newcommand{\dlv}{\texttt{dlv}\xspace}
\newcommand{\lpopt}{\texttt{Lpopt}\xspace}
\newcommand{\clasp}{\texttt{clasp}\xspace}
\newcommand{\wasp}{\texttt{wasp}\xspace}
\newcommand{\SOTAGrounding}{\protect\ensuremath{\mathcal{G}}\xspace}

\newcommand{\depGraph}{\ensuremath{\mathcal{D}}\xspace}

\newcommand{\varGraph}[1]{\ensuremath{\mathcal{D}\left(#1\right)}\xspace}



\allowdisplaybreaks
\newcommand{\lvlmap}{\ensuremath{\psi}}

\newcommand{\gprog}{P\xspace}
\newcommand{\prog}{\ensuremath{\Pi}\xspace}

\newcommand{\vecv}[0]{\mathbf}

\newcommand{\HybridGrounding}{\protect\ensuremath{\mathcal{H}}\xspace}
\newcommand{\AlphaG}{\protect{\texttt{ALPHA}}\xspace}
\newcommand{\ProASP}{\protect{\texttt{ProASP}}\xspace}
\newcommand{\MarkerSet}{\protect{\texttt{MARKER}}\xspace}
\newcommand{\NgrThreeG}{\protect{\texttt{NG-G}}\xspace}

\newcommand{\NgrThreeI}{\protect{\texttt{NG-I}}\xspace}

\newcommand{\ruleBody}{\ensuremath{B_r}\xspace}
\newcommand{\rulePosBody}{\ensuremath{B_r^+}\xspace}
\newcommand{\ruleNegBody}{\ensuremath{B_r^{-}}\xspace}
\newcommand{\ruleHead}{\ensuremath{H_r}\xspace}

\newcommand{\Card}[1]{\left|#1\right|}

\DeclareMathOperator{\scc}{SCC}

\newcommand{\eqdef}{\ensuremath{\,\mathrel{\mathop:}=}}

\newcounter{myenumctr}

\usetikzlibrary{matrix}

\tikzset{ 
	table/.style={
		row sep=-\pgflinewidth,
		column sep=-\pgflinewidth,
		nodes={rectangle,
			align=center,
				anchor=center},
	},
	row 1/.style={nodes={
		}},
}

\DeclareMathAlphabet\mathbfcal{OMS}{cmsy}{b}{n}

\pgfdeclarelayer{bg}    
\pgfsetlayers{bg,main}


\newtheorem{example}{Example}
\newtheorem{theorem}{Theorem}

\usepackage{comment}

{%
  \addtocounter{observation}{-1}
  \endgroup
}%

{%
  \addtocounter{corollary}{-1}
  \endgroup
}%

{%
  \addtocounter{proposition}{-1}
  \endgroup
}%

{%
  \addtocounter{lemma}{-1}
  \endgroup
}%

{%
  \addtocounter{thm}{-1}
  \endgroup
}%

\definecolor{backcolor}{rgb}{0.95,0.95,0.95}

\lstdefinestyle{mystyle}{
    backgroundcolor=\color{backcolor},
    breaklines=true,
    numbers=left,
    numbersep=5pt,
    basicstyle=\ttfamily\footnotesize,
    captionpos=b,
    literate={:-}{{$\leftarrow$}}1
             {!=}{{$\neq$}}1,
    escapeinside={<*>}{</*>},
    mathescape=true
}

\lstset{style=mystyle}

\hyphenation{either}

\newcommand\bcmdtab{\noindent\bgroup\tabcolsep=0pt%
  \begin{tabular}{@{}p{10pc}@{}p{20pc}@{}}}
\newcommand\ecmdtab{\end{tabular}\egroup}

\begin{document}

\lefttitle{A. Beiser, M. Hecher, and S. Woltran}
\jnlPage{1}{8}
\jnlDoiYr{2021}
\doival{10.1017/xxxxx}

  \title[Automated Hybrid Grounding]
        {Automated Hybrid Grounding Using Structural and Data-Driven Heuristics}
\begin{authgrp}
        \author{\sn{Alexander} \gn{Beiser}}
        \author{\sn{Stefan} \gn{Woltran}}
        \affiliation{TU Wien, Favoritenstrasse 9–11, Vienna, 1040, Austria}
        \author{\sn{Markus} \gn{Hecher}}
        \affiliation{Univ. Artois, CNRS, UMR8188, Computer Science Research Center of Lens (CRIL), Lens, France}
\end{authgrp}

\pagerange{\pageref{firstpage}--\pageref{lastpage}}

\history{\sub{16 April 2025} \rev{21 June 2025} \acc{8 Jul 2025}}



\maketitle
  \begin{abstract}
  The grounding bottleneck poses one of the key challenges that hinders the widespread adoption of Answer Set Programming in industry.
  Hybrid Grounding is a step in alleviating the bottleneck by combining the strength of standard bottom-up grounding with recently proposed techniques where rule bodies are decoupled during grounding.
  However, it has remained unclear when hybrid grounding shall use body-decoupled grounding and when to use standard bottom-up grounding.
  In this paper, we address this issue by developing automated hybrid grounding: we introduce a splitting algorithm based on data-structural
  heuristics that detects when to use  body-decoupled grounding and when 
  standard grounding is beneficial.
  We base our heuristics on the structure of rules and an estimation procedure that incorporates the data of the instance.
  The experiments conducted on our prototypical implementation demonstrate promising results,
  which show an improvement on hard-to-ground scenarios, whereas on hard-to-solve instances, we approach state-of-the-art performance\footnote{Supplementary material and prototype available under: \url{https://github.com/alexl4123/newground}}.  
  \end{abstract}

  \begin{keywords}
 Logic Programming, Answer Set Programming, Grounding, Grounding Bottleneck, Hybrid Grounding, Body-decoupled Grounding
  \end{keywords}
%
%
\section{Introduction}
\label{sec:introduction}

The so-called \emph{grounding bottleneck}~\cite{gebser_evaluation_2018,tsamoura_beyond_2020} in
Answer Set Programming (ASP) 
is one of the key factors that hinders 
large scale
adoption of ASP in the industry~\cite{falkner_industrial_2018}.
It occurs as part of the grounding step~\cite{kaminski_foundations_2023}, which is an integral part of the state-of-the-art
answer set programming systems, such as \clingo~\cite{gebser_theory_2016} or \dlv~\cite{leone_dlv_2006}.
Grounding replaces the variables of a non-ground ASP program by their domain values, which inherently results in an exponentially larger~\cite{dantsin_complexity_2001} ground program.

The grounding bottleneck is a long-standing problem, which is the reason why modern grounders like \gringo~\cite{gebser_abstract_2015} or \idlv~\cite{calimeri_i-dlv_2017}, are highly optimized systems.
They work according to a bottom-up and semi-naive  approach~\cite{gebser_grounding_2015}, which instantiates rules along their occurrence on the topological order of the dependency graph of the program.
Although these systems are highly optimized and implement advanced rewriting methods, as they incorporate structural information on rules~\cite{calimeri_optimizing_2018,bichler_lpopt_2016},
they are exponential in the number of variables in the worst case.

Body-decoupled grounding (BDG)~\cite{besin_body-decoupled_2022} is a novel approach that alleviates the grounding bottleneck by decomposing rules into literals and grounding the literals individually.
This is achieved by shifting some of the grounding effort from the grounder to the solver, thereby exploiting the power of modern ASP solving technology.
Practically, BDG's grounding size is only dependent on the maximum arity $a$ of a program.
Experiments on grounding-heavy tasks have shown promising results,
by solving previously ungroundable instances.
However, BDG on its own is not interoperable with other state-of-the-art techniques, which prohibits BDG from playing to its strengths in practical settings.
Hybrid grounding~\cite{beiser_bypassing_2024} partially alleviates the challenge
of interoperability, by enabling the free (manual) partitioning of a program $\prog$ into a part $\prog_{\HybridGrounding}$ grounded by BDG and $\prog_{\SOTAGrounding}$ grounded by bottom-up grounding.

Still, it remains unclear when the usage of BDG is beneficial.
Grounding with BDG potentially increases
the solving time, 
as BDG pushes effort spent in grounding to solving.
Rewriting techniques, used for example in \idlv, complicate this matter further.
Additionally, BDG's grounding size is solely dependent on the domain,
not considering the peculiarities of the instance.
%

\noindent We address this challenge by introducing \textit{automated hybrid grounding}, which is an algorithm for detecting those parts of a program that shall be grounded by BDG.
Our contributions are~three-fold:
\begin{itemize}
    \item We present the data-structural splitting heuristics, which decides (based on the structure of a rule and the instance's data) whether it is beneficial to ground with BDG.
    \item We develop the prototype \ngrThree that integrates BDG into bottom-up  procedures of state-of-the-art grounders and uses BDG according to data-structural heuristics.
    \item Our experiments show that with \ngrThree we approach state-of-the-art performance on solving-heavy scenarios, while beating the state-of-the-art on grounding-heavy scenarios.
\end{itemize}

The paper is structured as follows.
After this introduction (Section~\ref{sec:introduction}), we state the necessary preliminaries of ASP and on grounding techniques (Section~\ref{sec:preliminaries}).
We continue by showing our data-structural heuristics (Section~\ref{chpt:automated-splitting:sec:automated-splitting-heuristics}).
Next is the high-level description of our prototypical implementation \ngrThree (Section~\ref{sec:novel-prototype}),
which is followed by the conducted experiments (Section~\ref{sec:automatic-splitting:experiments}).
The paper ends with a conclusion and discussion (Section~\ref{sec:discussion_conclusion}).
\smallskip

\noindent
\textbf{Related work}. While state-of-the-art grounders use semi-naive grounding techniques~\cite{gebser_grounding_2015,calimeri_i-dlv_2017},
we focus on the interoperability between state-of-the-art grounders and alternative grounding procedures.
Alternative grounding procedures include lazy-grounding~\cite{weinzierl_blending_2017,weinzierl_advancing_2020}, 
lazy-grounding with heuristics~\cite{leutgeb_techniques_2018},
compilation-based techniques via lazy rule injection~\cite{cuteri_partial_2019,lierler_dualgrounder_2021}, or
compilation-based techniques via extensions of the CDNL procedure~\cite{mazzotta_compilation_2022,dodaro_compilation_2023,dodaro_blending_2024}. 
Approaches based on ASP Modulo Theory combine ASP with methods from other 
fields~\cite{banbara_clingcon_2017,balduccini_constraint_2017,brewka_answer_2012}.
Structure-based techniques also showed promising results~\cite{bichler_lpopt_2016}.
We focus on the alternative grounding procedure of body-decoupled grounding (BDG)~\cite{besin_body-decoupled_2022}.
In contrast to the other approaches, BDG is a rewriting approach based on complexity theory.
%
In~\citeN{beiser_bypassing_2024} BDG was extended by hybrid grounding and the handling of aggregates.
Hybrid Grounding enables the free partitioning of a program into a part grounded by semi-naive grounding and a part grounded by BDG.
Aggregates are handled by specially crafted rewriting procedures that decouple aggregates.
We extend the previous work on BDG by proposing a splitting heuristics
that decides when the usage of BDG is useful.
Further, we provide an extensive empirical evaluation of the heuristics with our prototype \ngrThree.
Previously proposed splitting heuristics include heuristics on when to use
bottom-up grounding and when to use structural rewritings~\cite{calimeri_optimizing_2018}.
Related work proposes a machine learning-based heuristics~\cite{mastria_machine_2020}.
In contrast, we focus on a splitting heuristics, when the usage of BDG is beneficial.

\vspace{-0.7cm}
\section{Preliminaries}
\label{sec:preliminaries}

\textbf{Ground ASP}. A ground program \gprog consists of ground rules of the form 
$a_1 \lor \ldots \lor a_l \leftarrow$ $ a_{l+1}, \ldots, a_m,$ $ \neg a_{m+1}, \ldots, \neg a_{n}$,
where $a_i$ are propositional atoms and $l,m,n$ are non-negative integers with $l \leq m \leq n$.
We let $\ruleHead \coloneq \{a_1, \ldots, a_l\}$,
$\rulePosBody \coloneq \{a_{l+1}, \ldots, a_m\}$,
$\ruleNegBody \coloneq \{a_{m+1}, \ldots, a_n\}$,
and $\ruleBody \coloneq \rulePosBody \cup \ruleNegBody$.
$r \in \gprog$ is normal iff $\Card{H_r} \leq 1$, a constraint iff $\Card{H_r} = 0$, and disjunctive iff $\Card{H_r} > 1$.
The \emph{dependency graph} \depGraph is the directed graph $\depGraph = (V,E)$,
where $V = \bigcup_{r \in \gprog} \ruleHead \cup \ruleBody$
and $E = \{(b,h)_{+} | r \in \gprog, b \in \rulePosBody, h \in \ruleHead\} \cup \{(b,h)_{-} | r \in \gprog, b \in \ruleNegBody, h \in \ruleHead\}$.
We refer by $(b,h)_{+}$ to a positively labeled edge and by $(b,h)_{-}$ to a negatively labeled edge.
A positive cycle consists solely of positive edges.
A program \gprog is \emph{tight} iff there is no positive cycle in \depGraph,
\gprog is not stratified iff there is a cycle in \depGraph that contains at least one negative edge, and
\gprog is \emph{head-cycle-free (HCF)} iff there is no positive cycle in \depGraph among any two atoms $\{a,b\} \subseteq H_r$.
%
\textit{IsConstraint(r)} is true iff $r$ is a constraint.

We proceed by defining the semantics of ASP.
Let $\text{HB}(\gprog)$ be the Herbrand Base (the set of all atoms).
For ground programs this is $\text{HB}(\gprog) = \{p \mid r \in \gprog, p \in \ruleHead \cup \ruleBody\}$.
An \emph{interpretation} $I$ is a set of atoms $I \subseteq \text{HB}(\gprog)$.
$I$ \emph{satisfies} a rule~$r$ iff
$(H_r{\,\cup\,} B^-_r) {\,\cap\,} I {\,\neq\,} \emptyset$ or
$B^+_r {\,\setminus\,} I {\,\neq\,} \emptyset$. 
$I$ is a \emph{model} of $\gprog$
iff it satisfies all rules of~$\gprog$. 
A rule $r\in\gprog$ is \emph{suitable for justifying} $a \in I$ iff $a\in H_r$, $\rulePosBody\subseteq I$, and
$I \cap \ruleNegBody = I \cap (\ruleHead \setminus \{a\}) = \emptyset$.
A \emph{level mapping} $\lvlmap : I \rightarrow \{0, \ldots, |I|-1\}$ assigns every atom in~$I$ a unique value~\cite{lin_tight_2003,janhunen_translatability_2006}.
An atom~$a\in I$ is \emph{founded} iff
there is a rule $r\in\gprog$
s.t. (i) $r$ is suitable for justifying $a$
and (ii) there are no cyclic-derivations, i.e., 
$\forall b \in \rulePosBody: \lvlmap(b) < \lvlmap(a)$.
$I$ is an \emph{answer set} of a normal (HCF) program~$\gprog$ iff~$I$ is a model (satisfied) of~$\gprog$,
and all atoms in~$I$ are founded.
The \emph{Gelfond-Lifschitz (GL) reduct} is the classical way to define semantics.
The GL reduct of~$\gprog$ under~$I$ is the program~$\gprog^I$ obtained
from $\gprog$ by first removing all rules~$r$ with
$B^-_r{\,\cap\,} I\neq \emptyset$ and then removing all~$p \in \ruleNegBody$ from the remaining rules~$r$~\cite{gelfond_classical_1991}. 
%
$I$ is an \emph{answer set} of a program~$\gprog$ if $I$ is a \emph{
minimal model} (w.r.t.~$\subseteq$) of~$\gprog^I$. 

\textbf{Non-ground ASP.}
A non-ground program \prog consists of non-ground rules $r$ of the form 
$p_1(\vecv {X}_1) \vee \ldots \vee p_\ell(\vecv {X_\ell}) \leftarrow p_{\ell{+}1}(\vecv {X}_{\ell{+}1}), \ldots, p_{m}(\vecv {X}_{m}), \neg p_{m{+}1}(\vecv {X}_{m{+}1}), \ldots, \neg p_n({\vecv {X}_n})$,
where each $p_i(\vecv{X}_i)$ is a literal and
$l,m,n$ are non-negative integers s.t. $l \leq m \leq n$.
A literal $p_i(\vecv{X}_i)$ consists of a \emph{predicate} $p_i$
and a \emph{term} vector $\vecv{X}_i = \langle x_1, \ldots, x_z \rangle$.
A \emph{term} $x_j \in \vecv{X}_i$ is a constant or a variable.
For a predicate $p_i$ let $|\mathbf{X}_i|$ be its arity $a(p_i) = |p_i| = |\mathbf{X}_i|$,
and for a rule $r \in \prog$, let $a = \max_{p(\mathbf{X}) \in \ruleHead \cup \ruleBody} |\mathbf{X}|$ be the maximum arity.
$\text{IsVar}(x)$ evaluates to true iff the term $x$ is a variable.
We furthermore define $\var(r){\,\eqdef\,} \{x \mid x\in \vecv{X}, p(\vecv X)\in H_r \cup \ruleBody, \text{IsVar}(x)\}$.
For non-ground rules we define \ruleHead, \rulePosBody, \ruleNegBody, and \ruleBody as in the ground case,
as we do with the attributes \emph{disjunctive}, \emph{normal}, \emph{constraint}, \emph{stratified}, \emph{tight}, and \emph{HCF}.
The size of a rule is $|r| = |\ruleHead \cup \ruleBody|$
and of a program $|\prog| = \sum_{r \in \prog} |r|$.
Grounding is the instantiation of the variables by their domain.
Let $\mathcal{F} = \{p(\vecv{D}) \mid p(\vecv{D}) \in \prog, \forall d \in \vecv{D}: \neg \text{IsVar}(d)\}$ be the facts and
$\dom(\prog) = \{d \mid p(\vecv{D}) \in \mathcal{F}, d \in \vecv{D}\}$ be the domain.
Let $x$ be a variable, then $\dom(x) = \dom(\prog)$.
\emph{Naive grounding} $\mathcal{G}_N(\prog)$ instantiates for each rule all variables by all possible domain values, which results in a grounding size 
in $\mathcal{O}\left( |\prog| \cdot |\dom(\prog)|^{\max_{r \in \prog} |\var(r)|} \right)$.
For non-ground programs the herbrand base $\text{HB}(\prog)$ is defined as $\text{HB}(\prog) = \{p(\vecv{D}) \mid r \in \mathcal{G}_N(\prog), p(\vecv{D}) \in \ruleHead \cup \ruleBody\}$.
The semantics of a non-ground program $\prog$ is defined over its ground version $\mathcal{G}_N(\prog)$ and carries over from the ground case.

The non-ground dependency graph~$\depGraph_{\prog}$ of the non-ground program $\prog$ carries over from the ground case and is defined over the predicates.
$\scc(\prog)$ refers to
the set of \emph{strongly-connected components (vertices)} 
of~$\mathcal{D}_\prog$.
A reduced graph $\depGraph_{R}(G)$ of a graph $G = (V,E)$
is $\depGraph_{R}(G) = (V_r,E_r)$, where $V_r = \scc(G)$ 
and
$E_r = \{(s_1,s_2) \mid s_1,s_2 \in \scc(G), s_1 \not = s_2, \exists v_1 \in s_1 \exists v_2 \in s_2: (v_1,v_2) \in E\}$.
Any reduced graph is a directed acyclic graph (DAG).
Let $p$ be a predicate and $L_{\prog}$ be a topological order of the reduced dependency graph $\depGraph_R(\depGraph) = (V_r, E_r)$
and let $\scc_{\prog}(p)$ be the function $\scc_{\prog}(p):V \rightarrow V_r$ 
that returns the corresponding SCC of $p$, i.e., $\scc_{\prog}(p) = s$ s.t. $s \in \scc(\prog)$ and $p \in S$. 
Let $s = \scc_{\prog}(p)$ and $S_{\prec p}(0) = \{s\}$.
We iteratively extend $S_{\prec p}$ to a fixed point by $S_{\prec p}(t+1) = \{s | s \in \scc(\prog), \exists s' \in S_{\prec p}(t): (s,s') \in E_r\} \cup S_{\prec p}(t)$ for $t > 0$.
A fixed point is reached when $S_{\prec p}(t+1) = S_{\prec p}(t)$,
which we denote as $S_{\prec p} = S_{\prec p}(t)$.
As $\depGraph_{R}(G)$ is a DAG, such a fixed point always exists~\cite{tarski_lattice-theoretical_1955,knaster_theoreme_1928}.
A predicate $p$ is stratified iff $\forall s \in S_{\prec p}$,
there is no cycle with at least one negative edge in $s$.
Further, let \textit{IsStratified(r)} be true iff $r$ contains (only) stratified body predicates $p \in \ruleBody$.
Let \textit{IsTight(r)} be true iff $\forall h \in \ruleHead: \forall p \in \rulePosBody: \scc_{\prog}(h) \not = \scc_{\prog}(p)$ - so $r$ occurs in a tight part.
The variable graph $\varGraph{r} = (V,E)$ for a rule $r \in \prog$ is defined as the undirected graph where
$V = \var(r)$ and $E = \{(x_i,x_j) \mid x_i,x_j \in \var(r), \exists p(\vecv{X}) \in \ruleHead \cup \ruleBody: \{x_i,x_j\} \subseteq \vecv{X}\}$.
A tree decomposition (TD) $\mathcal{T} = (T, \chi)$ is defined over an undirected graph $G = (V,E)$ where $T$ is a tree and $\chi$ a labeling function 
$\chi: T \rightarrow V$. $\chi(t) \subseteq V$ is called a bag.
A tree decomposition must fulfill: (i) $\forall v \in V \exists t \in T: v \in \chi(t)$,
(ii) $\forall (u,v) \in E \exists t \in T: \{u,v\} \subseteq \chi(t)$, and 
(iii) every occurrence of $v \in V$ must form a connected subtree in T w.r.t. $\chi$, so
$\forall t_1, t_2, t_3 \in T$, s.t. whenever $t_2$ is on the path between $t_1$ and $t_3$,
it must hold $\chi(t_1) \cap \chi(t_3) \subseteq \chi(t_2)$.
The width of a tree decomposition is defined as the largest cardinality of a bag minus one, so $\max_{t \in T} |\chi(t)| - 1$.
The treewidth (TW) is the minimal width among all TDs.
Further, let $\varphi_{r}$ denote the bag size of a minimal tree decomposition of the variable graph of $r$.

\textbf{Bottom-up/Semi-naive grounding}.
Grounders \gringo and \idlv use (bottom-up) semi-naive database instantiation techniques to ground a program $\prog$~\cite{gebser_grounding_2015,calimeri_i-dlv_2017}.
In the following, we sketch the intuition.
Let $L_{\prog}$ be a topological order of $G_R(\depGraph_{\prog})$, and
let $D$ be the \emph{candidate set}, where $D \subseteq \text{HB}(\prog)$; initially $D = \mathcal{F}$.
Intuitively, the candidate set $D$ keeps track of all possibly derivable literals
and is iteratively expanded by moving along the topological order $L_{\prog}$.
For each $v \in L_{\prog}$ 
rules are instantiated according to the candidate set $D$ by a fixed-point algorithm.
If a tuple is in $D$ it is possibly true,
conversely, if a tuple is not in $D$, it is surely false.
If an SCC contains a cycle, semi-naive techniques are used 
to prevent unnecessary derivations~\cite{gebser_grounding_2015,calimeri_i-dlv_2017}.
The grounding size is exponential in the maximum number of variables $\mathcal{O}\left( \sum_{r \in \prog} |\dom(\prog)|^{|\var(r)|} \right)$ in the worst-case.
We use the terms \emph{state-of-the-art (SOTA)}, traditional, bottom-up, or semi-naive grounding interchangeably.

\textbf{Bottom-up grounding solves stratified programs}.
Bottom-up grounding is typically implemented in a way that enables full
evaluation of stratified programs.
Technically, this is implemented by partitioning the candidate set $D$
into a surely derived set $D_T$ and a potentially derived set $D_{pot}$.
Conversely, for any $a \in \text{HB}(\prog)$, but $a \not \in D_{pot} \cup D_T$, we know that we can never derive $a$.
This split leads to a series of improvements related to instantiating rules,
among them is the full evaluation of stratified programs.
However, these improvements have 
no effect on the grounding size of non-stratified programs in the worst case, 
thereby remaining 
exponential in the variable number.

\textbf{Structure-aware rewritings}.
Utilizing the rule structure to rewrite non-ground rules is performed by \lpopt~\cite{morak_preprocessing_2012,bichler_lpopt_2016}.
It computes a minimum size tree decomposition,
which is then used to introduce fresh rules with a preferably smaller grounding size.
In more detail,
for every rule $r \in \prog$ \lpopt first creates the
variable graph \varGraph{r}.
After computing a minimum-size tree decomposition,
it introduces fresh predicates and fresh rules for every bag of the tree decomposition.
The arity of the fresh predicates corresponds to the respective bag size,
as does the number of variables per rule.
Let $\textit{TW}(\varGraph{r})$ be the maximum treewidth of all rules $r \in \prog$,
then $\varphi_r = \textit{TW}(\varGraph{r}) + 1$ is its bag size.
It was shown that \lpopt produces a rewriting that is exponential in $\varphi_r$,
where $\varphi_r \leq \max_{r \in \prog} |var(r)|$:
$\mathcal{O}(|\prog| \cdot |\dom(\prog)|^{\varphi_r})$.
Internally, \idlv uses the concepts of \lpopt to reduce the grounding size~\cite{calimeri_optimizing_2018}.

\textbf{Body-decoupled Grounding}.
Body-decoupled grounding (BDG)~\cite{besin_body-decoupled_2022} produces grounding sizes that are exponential only in the maximum arity.
Conceptually, BDG decouples each rule into its literals which are subsequently grounded.
As each literal has at most arity-many variables, its grounding size can be at most exponential in its arity.
Semantics is ensured in three ways:
(i) For a rule $r$, all possible values of its head literals are guessed, and
(ii) satisfiability and (iii) foundedness are ensured by explicitly encoding them.
Interoperability with other techniques is ensured by hybrid grounding~\cite{beiser_bypassing_2024}.

Let $\prog$ be an HCF program and $\prog_{\HybridGrounding} \cup \prog_{\SOTAGrounding}$ be a partition thereof.
Then, let $\HybridGrounding$ be the \emph{Hybrid Grounding} procedure that is executed on $(\prog_{\HybridGrounding}, \prog_{\SOTAGrounding})$, 
where $\prog_{\HybridGrounding}$ is grounded by BDG and $\prog_{\SOTAGrounding}$ is grounded by bottom-up grounding.
Let $a$ be the maximum arity ($a = \max_{r\in\prog} \max_{p(\mathbf{X}) \in \ruleHead \cup \ruleBody} |\mathbf{X}|$)
and let $c$ be a constant defined as:
where $c =a$ for $r$ being a constraint,
$c=2\cdot a$ for $r$ occurring in a tight HCF program, and
$c=3 \cdot a$ for $r$ occurring in an HCF program.
Then, hybrid grounding for $\HybridGrounding(\prog, \emptyset)$ has a grounding size\footnote{For brevity we sometimes shorten $\mathcal{O}\left(|\prog| \cdot |\dom(\prog)|^{x} \right)$
with $\approx |\dom(\prog)|^{x}$ for an arbitrary $x \in \mathbb{N}$. }
of $\approx |\dom(\prog)|^{c}$.
The coefficients $c$ stem from the nature of the checks we have to perform.
For constraints, it is sufficient to check satisfiability,
while for normal programs we additionally need to check foundedness, 
which increases the grounding size to $c=2\cdot a$.
For HCF programs, cyclic derivations must be prevented.
This is handled with level-mappings,
where the transitivity check increases the grounding size to $c=3 \cdot a$.

\vspace{-0.3cm}
\section{Automated Splitting Heuristics}
\label{chpt:automated-splitting:sec:automated-splitting-heuristics}

We designed an automated splitting heuristics that decides when it is beneficial to use BDG. This approach is given in Algorithm~\ref{alg:data-heur}.
Intuitively, the decision is based on fixed structural measures, like the number of variables and treewidth,
as well as data-driven grounding-size estimation.
Let $\prog$ be an HCF program, and $r \in \prog$,
then
let $\hat{T}_{\mathcal{H}}(r)$ be the estimated grounding size of BDG, and
let $\hat{T}_{\mathcal{\bowtie}}(r)$ be the estimated SOTA grounding size.
The algorithm takes as input a rule $r$ and the set \MarkerSet.
Set \MarkerSet stores whether a rule $r$ is grounded by BDG or SOTA if $(r,\text{BDG}) \in \text{\MarkerSet}$ or $(r, \text{SOTA}) \in \text{\MarkerSet}$ respectively.
This is then used to pass $\prog_{\HybridGrounding} = \{r \mid r \in \prog, (r,\text{BDG}) \in \text{\MarkerSet} \}$
and $\prog_{\SOTAGrounding} = \{r \mid r \in \prog, (r,\text{SOTA}) \in \text{\MarkerSet}\}$
to 
$\HybridGrounding$. 

First, in Lines~(\ref{chpt:aut-split:alg:data-heur-line-2})--(\ref{chpt:aut-split:alg:data-heur-line-3}),
the algorithm performs a stratification check, where rules are SOTA-grounded whenever rules occur in stratified parts.
Subsequently, the rule structure is checked, and a structural rewriting is performed
in Lines~(\ref{chpt:aut-split:alg:data-heur-line-4})--(\ref{chpt:aut-split:alg:data-heur-line-7}), if beneficial.
Finally, in Lines~(\ref{chpt:aut-split:alg:data-heur-line-9})--(\ref{chpt:aut-split:alg:data-heur-line-16}).
BDG is evaluated and marked whenever it is structurally and data-estimation-wise beneficial.

\begin{algorithm}[t]
    \KwData{Rule $r$, Set  \text{\MarkerSet} of marked rules}
    \uIf{IsStratified($r$)} { \label{chpt:aut-split:alg:data-heur-line-2}
        \MarkerSet $\leftarrow \text{\MarkerSet} \cup (r, \text{SOTA})$ \; \label{chpt:aut-split:alg:data-heur-line-3}
    }
    \uElseIf{$\varphi_r < |\var(r)| \wedge \hat{T}_{\bowtie}(\text{\lpopt}(r)) < \hat{T}_{\bowtie}(r)$ } { \label{chpt:aut-split:alg:data-heur-line-4}
        $R_{l} \leftarrow \text{\lpopt}(r)$ \; \label{chpt:aut-split:alg:data-heur-line-1}
        \For{$r_l$ in $R_{l}$} { \label{chpt:aut-split:alg:data-heur-line-5}
            $\textit{Heur}_{\text{struct}}(r_l, \text{\MarkerSet})$ \; \label{chpt:aut-split:alg:data-heur-line-6}
        }\label{chpt:aut-split:alg:data-heur-line-7}
    }\label{chpt:aut-split:alg:data-heur-line-8}
    \uElseIf{$a < \varphi_r \wedge $ IsConstraint($r$) $ \wedge \hat{T}_{\mathcal{H}}(r) < \hat{T}_{\bowtie}(r)$} { \label{chpt:aut-split:alg:data-heur-line-9}
        \MarkerSet $\leftarrow \text{\MarkerSet} \cup (r, \text{BDG})$ \; \label{chpt:aut-split:alg:data-heur-line-10}
    } 
    \uElseIf{$2 \cdot a < \varphi_r \wedge $ IsTight($r$) $ \wedge \hat{T}_{\mathcal{H}}(r) < \hat{T}_{\bowtie}(r)$ } { \label{chpt:aut-split:alg:data-heur-line-11}
        \MarkerSet $\leftarrow \text{\MarkerSet} \cup (r, \text{BDG})$ \; \label{chpt:aut-split:alg:data-heur-line-12}
    }
    \uElseIf{$3 \cdot a < \varphi_r \wedge \hat{T}_{\mathcal{H}}(r) < \hat{T}_{\bowtie}(r)$  } { \label{chpt:aut-split:alg:data-heur-line-13}
        \MarkerSet $\leftarrow \text{\MarkerSet} \cup (r, \text{BDG})$ \; \label{chpt:aut-split:alg:data-heur-line-14}
    }
    \Else{ \label{chpt:aut-split:alg:data-heur-line-15}
        \MarkerSet $\leftarrow \text{\MarkerSet} \cup (r, \text{SOTA})$ \; \label{chpt:aut-split:alg:data-heur-line-16}
    } \label{chpt:aut-split:alg:data-heur-line-17}
    \caption{$\textit{Heur}(r,\text{\MarkerSet})$ for Computing Data-Structural Heuristics}
    \label{alg:data-heur}
\end{algorithm}

\begin{example}
    \label{chpt:automatic-splitting:fig:estimation-size}
    We show the details and underlying intuitions of the heuristics along the lines of the example shown below.
    A simple instance graph is given by means of atoms over the edge predicate $e/2$.
    We guess subgraphs $f/2$, $g/2$, and $h/2$,
    where we forbid three or more connected segments in subgraph $f/2$,
    cliques of size $\geq 3$ in subgraph $g/2$,
    and aim at inferring all vertices of a clique of size $\geq 3$ in subgraph $h/2$.    
    Let $r_1$, $r_2$, $r_3$ be the rule in Line~(2), (3), (4), respectively.
      \begin{lstlisting}
{f(X,Y)} :- e(X,Y). {g(X,Y)} :- e(X,Y). {h(X,Y)} :- e(X,Y).
$\;\;$:- f(X1,X2), f(X2,X3), f(X3,X4).
$\;\;$:- g(X1,X2), g(X1,X3), g(X2,X3).
i(X1) :- h(X1,X2), h(X1,X3), h(X2,X3).
    \end{lstlisting}
\end{example}
    
Previous results indicate that BDG should be used for \textit{dense rules} on \textit{dense instances}~\cite{besin_body-decoupled_2022,beiser_bypassing_2024}.
However, the terms \textit{dense rule} and \textit{dense instance} were loosely defined and
the usage of BDG was guided by intuition.
Our algorithm makes these terms precise and \emph{transitions from intuition to computation}.

\smallskip
\noindent \textbf{Variable-based Denseness}. Next, we motivate how we consider variable-based denseness.
\begin{example}
Observe how $r_1$ has four and $r_2$, and $r_3$ have three variables.
Standard bottom-up grounding is exponential in these variables in the worst case.
Without considering contributions of data and structural based rewritings for now,
bottom-up's grounding size for rule $r_1$ is
$\approx |\dom(\prog)|^{4}$,
while it is $\approx |\dom(\prog)|^{3}$ for $r_2$, and
$\approx |\dom(\prog)|^{3}$ for $r_3$.
In contrast, BDG's grounding size is only dependent on the maximum arity and the type of the rule.
The maximum arity of all $r_1,r_2$, and $r_3$ is $2$.
As both $r_1$ and $r_2$ are constraints, their grounding size is in $\approx |\dom(\prog)|^{2}$,
while as $r_3$ is a tight HCF rule
its grounding size is $\approx |\dom(\prog)|^{3}$.
The differences between BDG and SOTA are striking: A reduction from $\approx |\dom(\prog)|^4$ to $\approx |\dom(\prog)|^2$ and
from $\approx |\dom(\prog)|^3$ to $\approx |\dom(\prog)|^2$ 
for $r_1$ and $r_2$, respectively (no difference for $r_3$).
\end{example}
\vspace{-0.2cm}

We cover \textit{variable-based denseness}
based on the rule type and a comparison between the number $|\var(r)|$ of the variables and the maximum arity $a$.
Henceforth, whenever the maximum arity adjusted for rule type is strictly smaller than the number of the variables, BDG is used.
Let the maximum head arity be $a_h = \max_{p(\mathbf{X}) \in \ruleHead} |\mathbf{X}|$
and the maximum body arity be $a_b = \max_{p(\mathbf{X}) \in \ruleBody} |\mathbf{X}|$.
For constraints, using BDG is beneficial whenever $a < |\var(r)|$,
for tight HCF rules if $a_h + a_b \leq 2 \cdot a < |\var(r)|$,
and for HCF rules if $3 \cdot a < |\var(r)|$.

When the projected grounding sizes match asymptotically,
precedence is given to the bottom-up procedure: First, due to the effects of data (discussed below) and second,
due to BDG's nature of pushing effort from grounding to solving.
Since bottom-up grounding solves stratified programs with a grounding size in $\approx |\dom(\prog)|^a$, grounding stratified parts with BDG is not beneficial.

\medskip
\noindent\textbf{Incorporating Rule Structure}. To grasp the importance of structure, recall our running example.

\vspace{-0.2cm}
\begin{example}
We depict the variable graphs of $r_1$, $r_2$, and $r_3$ in Figure~\ref{fig:var-graph-rules},
which have treewidths of $1$, $2$, and $2$ respectively.
A minimal tree decomposition of the variable graph of $r_1$ has a bag size of $\varphi_{r_1} = 2$.
Take for example $\mathcal{T} = (T,\chi)$, where $T = \left(\{t_1,t_2,t_3\},\{\{t_1,t_2\},\{t_2,t_3\}\}\right)$ and $\chi(t_1) = \{X1,X2\}$,
$\chi(t_2) = \{X2,X3\}$, and $\chi(t_3) = \{X3,X4\}$.
Based on $\mathcal{T}$, we depict in the next listing a possible structural rewriting.
Observe the grounding size of $\approx |\dom(\prog)|^2$.
\begin{lstlisting}
tmp1(X3) :- f(X3,X4). tmp2(X2) :- f(X2,X3), tmp1(X3). :- f(X1,X2), tmp2(X2).
\end{lstlisting}
In contrast to this, a minimal tree decomposition of $r_2$ or $r_3$ has a bag size of $\varphi_r = 3$,
such as $\mathcal{T} = (T,\chi)$, where $T = \left(\{t_1\},\emptyset\right)$ and $\chi(t_1) = \{X1,X2,X3\}$.
Using structural rewritings for $r_2$ or $r_3$ has no effect.
%
%
Therefore, the grounding sizes of BDG and \lpopt match for $r_1$ (both are $\approx |\dom(\prog)|^2$),
while BDG achieves a reduction from $\approx |\dom(\prog)|^3$ to $\approx |\dom(\prog)|^2$ for $r_2$.
For $r_3$, both have a grounding size of $\approx |\dom(\prog)|^3$.
Whenever grounding sizes of BDG and \lpopt match, we give preference to \lpopt,
as for BDG there are guesses\footnote{Guesses are due to Equations~(2), (4), and (9) of Figure~1 in hybrid grounding~\cite{beiser_bypassing_2024}.} during solving.
\end{example}
\vspace{-0.2cm}

The observations above are incorporated in the heuristics by computing the treewidth of its variable graph and using \lpopt whenever the bag size $\phi_r$
of a minimal tree decomposition is strictly smaller than the number $|\var(r)|$ of variables ($\phi_r < |\var(r)|$).
See Lines~(\ref{chpt:aut-split:alg:data-heur-line-4})--(\ref{chpt:aut-split:alg:data-heur-line-7}).
Subsequently, a decision between BDG and bottom-up grounding is made
based on the bag size of a minimal decomposition compared to the maximum arity of~$r$ ($a < \phi_r$),
and the rule-type (constraint, tight, non-tight).
Thereby, we transition from variable-based denseness to structure-aware denseness,
which we incorporate into our algorithm in Lines~(\ref{chpt:aut-split:alg:data-heur-line-9}),~(\ref{chpt:aut-split:alg:data-heur-line-11}), and~(\ref{chpt:aut-split:alg:data-heur-line-13}).
%
%
\begin{figure}[t]
    %
    \centering
    \includegraphics[width=9.5cm]{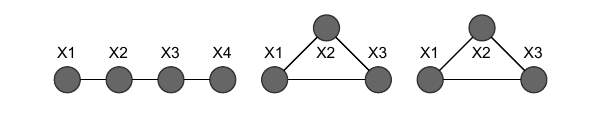}
    \vspace{-2em}
    \caption{Variable Graphs of $r_1$ (left), $r_2$ (center), and $r_3$ (right) for Example~\ref{chpt:automatic-splitting:fig:estimation-size}.}
    \label{fig:var-graph-rules}
\end{figure}

\smallskip
\noindent \textbf{Incorporating Data-Awareness}.
The incorporation of data into our heuristics is vital.
In its absence, BDG may be used when it is unwise to use it.
Indeed, BDG is a domain-based grounding procedure, whose grounding size depends entirely on the domain of the program.
On the other hand, bottom-up grounding is partially data-aware, as rule bodies perform joins between variables.

\vspace{-0.2cm}
\begin{example}
To visualize this, consider $r_2$ and a graph that is a path with $100$ vertices.
%
While BDG's grounding size of $r_2$ is $\approx |100|^2$,
bottom-up's grounding size is $0$.
\end{example}
\vspace{-0.2cm}

To incorporate data into heuristics, observe that rule instantiations are similar to joins in a database system,
where joins are done in the positive body~\cite{leone_improving_2001}.
Interestingly, join size estimation procedures are common in the literature~\cite{garcia-molina_database_2008}.
We estimate the SOTA grounding size according to the join-selectivity criterion~\cite{leone_improving_2001}\footnote{A variant of the join-selectivity criterion is used in \idlv~\cite{calimeri_optimizing_2018}.}.
%

Let $r \in \prog$.
We compute the join estimation $\hat{T}_{\bowtie}(r)$ in an iterative way,
by considering one literal $p_{i} \in \rulePosBody$ at a time.
We start with the first positive body literal $p_{l+1}$ and end with the
last positive body literal $p_{m}$, as $\rulePosBody = \{p_{l+1}, \ldots, p_m\}$.
Further, we denote the computation of all positive predicates up to and including $p_{i}$ as $A_i$.
Let $\hat{T}(p_{i+1})$ be the estimated number of tuples of $p_{i+1}$,
and $\hat{T}(A_i)$ be the estimated join size up to and including predicate $p_{i}$.
Let $\dom(X,r)$ be the domain of variable $X$ for the rule $r$,
$\dom(X,p_i)$ be the domain of variable $X$ for literal $p_i$,
and let $p_X$ be $p_X = \{p(\mathbf{X}) \mid p(\mathbf{X}) \in \rulePosBody, X \in \mathbf{X}\}$, where $X \in \var(r)$ is a variable.
We compute a variable's domain size as $\dom(X,r) = \bigcup_{p_i(\mathbf{X}) \in p_X} \dom(X,p_i)$.
Equations~(\ref{eq:estimate-join-base})--(\ref{eq:estimate-join2}) show our join size estimation for SOTA-grounding for a rule $r$,
where $\hat{T}_{\bowtie}(r)$ refers to the estimation for a rule $r$.

~\\[-2.5em]
\begin{align}
    \label{eq:estimate-join-base} \hat{T}(A_{l+1}) &= \hat{T}(p_{l+1})\\
    \label{eq:estimate-join} \hat{T}(A_{i+1}) &= \hat{T}(A_i \bowtie p_{i+1}) = \frac{\hat{T}(A_i) \cdot \hat{T}(p_{i+1})}{\Pi_{X \in \var(A_i) \cap \var(p_{i+1})} |\dom(X,r)| } \\
    \label{eq:estimate-join2} \hat{T}_{\bowtie}(r) &= \hat{T}(A_m) = \hat{T}(A_{m-1} \bowtie p_m)
\end{align}

%
%

Precise grounding size estimations are possible for hybrid grounding.
We show in Equations~(\ref{red-est:head})--(\ref{red-est:found-3}) the grounding size estimations for non-ground normal (HCF) programs.
Each \emph{equation} estimates the size of the respective hybrid grounding \emph{rules}\footnote{To avoid confusion, we distinguish in this paragraph between \emph{equation}, the grounding size estimation, and \emph{rule}, the equation of the hybrid grounding reduction that is being estimated as introduced in~\citeN{beiser_bypassing_2024}.}, as introduced in~\citeN{beiser_bypassing_2024}.
Consider for example Equation~(\ref{red-est:sat-3}),
which estimates the size of Rules~(5)--(7) of the hybrid grounding reduction as introduced in~\citeN{beiser_bypassing_2024}.
It intuitively captures for a rule $r \in \prog$ whether a literal $p(\mathbf{X}) \in \ruleHead \cup \ruleBody$
for an arbitrary instantiation $p(\mathbf{D}) \in \text{HB}(\prog)$ contributes to $r$ being satisfied.
We estimate this as $\hat{T}_{\mathcal{H}}^{S3}(r)$ in Equation~(\ref{red-est:sat-3}).
We continue with a brief description of the other equations and their corresponding rules in the hybrid grounding reduction.
Equation~(\ref{red-est:head}) is the estimation of the head-guess size,
for the respective Rule~(2).
Equations~(\ref{red-est:sat-1})--(\ref{red-est:sat-3}) estimate the size of the satisfiability encoding,
where Equations~(\ref{red-est:sat-1}) and~(\ref{red-est:sat-2}) estimate the impact of variable guessing, saturation, and the constant parts,
which relate to the Rules~(4) and~(8) in hybrid grounding.
We already described Equation~(\ref{red-est:sat-3}) above.
Equations~(\ref{red-est:found-1})--(\ref{red-est:found-3}) estimate the size of the foundedness part.
Equation~(\ref{red-est:found-1}) estimates the size of the constraint that prevents unfounded answersets, which relates to Rule~(12).
Equation~(\ref{red-est:found-2}) estimates the size of the variable instantiations, which relates to  Rule~(9).
Finally, Equation~(\ref{red-est:found-3}) is concerned with the estimation when a rule is suitable for justifying an atom, which relates to Rules~(10)--(11).
\vspace{-0.6cm}

\begin{align}
    \label{red-est:head}& \hat{T}_{\mathcal{H}}^{G}(r) = 2 \cdot \left( \Sigma_{h(\mathbf{X}) \in H_r} \Pi_{X \in \mathbf{X}} |\dom(X)| \right)\\
    \label{red-est:sat-1}& \hat{T}_{\mathcal{H}}^{S1}(r) = 2 \cdot \Sigma_{X \in \var(r)} |\dom(X)|\\ 
    \label{red-est:sat-2}& \hat{T}_{\mathcal{H}}^{S2}(r) = 2\\
    \label{red-est:sat-3}& \hat{T}_{\mathcal{H}}^{S3}(r) = \Sigma_{p(\mathbf{X}) \in \ruleHead \cup \ruleBody} \Pi_{X \in \mathbf{X}} |\dom(X)| \\
    \label{red-est:found-1}& \hat{T}_{\mathcal{H}}^{F1}(r) = \Sigma_{h(\mathbf{X}) \in H_r} \Pi_{X \in \mathbf{X}} |\dom(X)|\\
    \label{red-est:found-2}& \hat{T}_{\mathcal{H}}^{F2}(r) = \Sigma_{h(\mathbf{X}) \in H_r} \left( \Sigma_{Y \in \var(r) \setminus \mathbf{X}} \left(|\dom(Y)| \cdot \Pi_{X \in \mathbf{X}} |\dom(X)| \right) \right)\\
    \label{red-est:found-3}& \hat{T}_{\mathcal{H}}^{F3}(r) = \Sigma_{h(\mathbf{X}) \in H_r} \left( \Sigma_{p(\mathbf{Y}) \in \ruleHead \cup \ruleBody \setminus \{h(\mathbf{X})\}} \left(\Pi_{Y \in \mathbf{Y}} |\dom(Y)| \cdot \Pi_{X \in \mathbf{X}} |\dom(X)| \right) \right)
\end{align}
\vspace{-0.6cm}

We are left with Equation~(\ref{red-est:tot}), which computes $\hat{T}_{\mathcal{H}}(r)$,
the hybrid grounding size estimation for a rule $r$.
Equation~(\ref{red-est:tot}) sums up Equations~(\ref{red-est:head})--(\ref{red-est:found-3}).

\vspace{-0.6cm}
\begin{align}
    \label{red-est:tot}& \hat{T}_{\mathcal{H}}(r) = \hat{T}_{\mathcal{H}}^{G}(r) + \hat{T}_{\mathcal{H}}^{S1}(r) + \hat{T}_{\mathcal{H}}^{S2}(r) + \hat{T}_{\mathcal{H}}^{S3}(r) + \hat{T}_{\mathcal{H}}^{F1}(r) + \hat{T}_{\mathcal{H}}^{F2}(r) + \hat{T}_{\mathcal{H}}^{F3}(r)
\end{align}

\begin{figure}[t]
    \centering
    \begin{subfigure}[t]{0.49\textwidth}
        \centering
        \includegraphics[width=6.5cm]{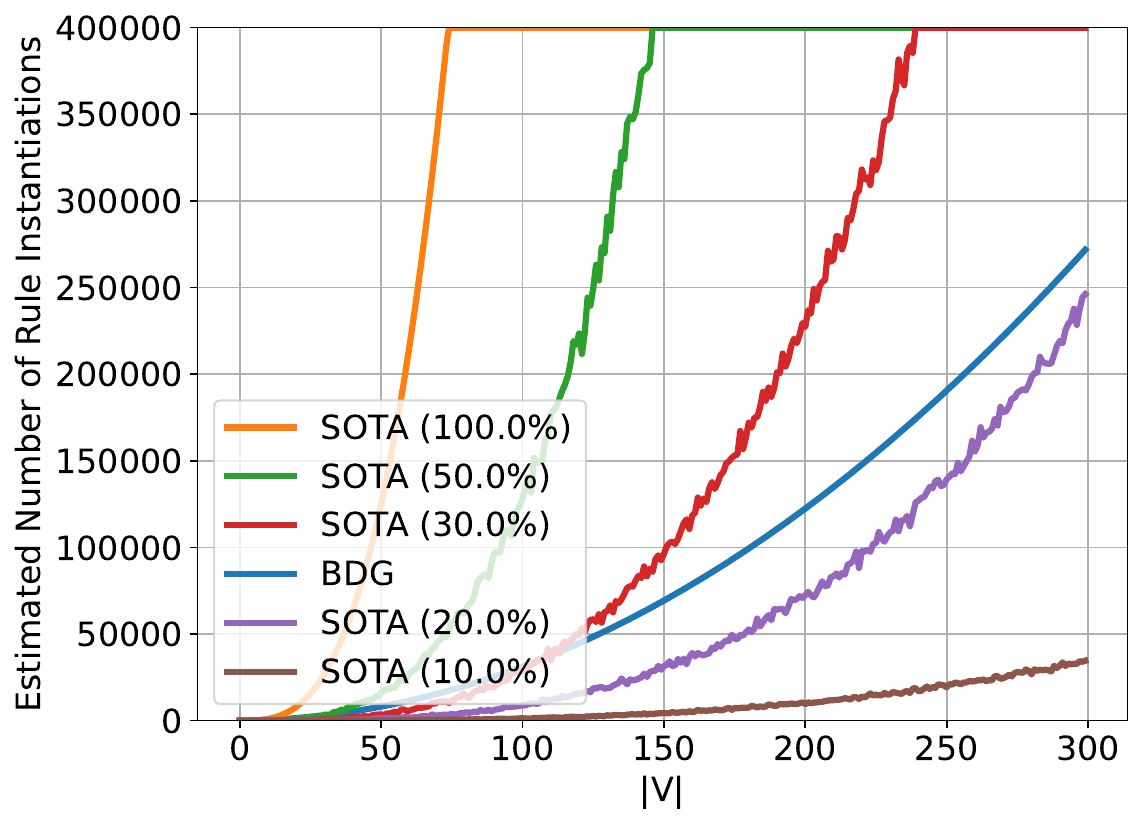}\vspace{-.5em}
    \end{subfigure}
    \begin{subfigure}[t]{0.49\textwidth}
    \centering
        \includegraphics[width=6.5cm]{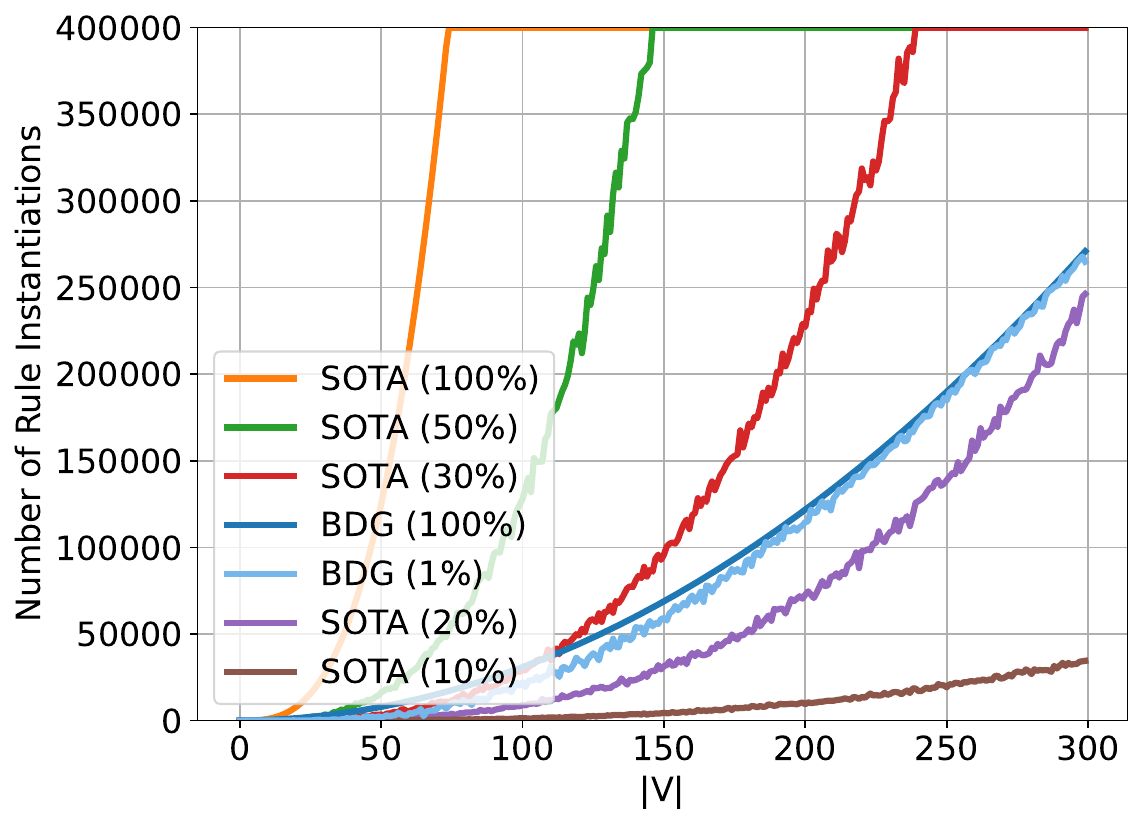}\vspace{-.5em}
    \end{subfigure}
\label{chpt:automatic_splitting:fig:example_estimation}
\vspace{-.75em}
    \caption{
            Plot comparing the estimated (left) and actual (right) number of ground rules of $r_2$ of Example~\ref{chpt:automatic-splitting:fig:estimation-size}.
            Comparison between SOTA and BDG.
            x-axis: Number of vertices; y-axis: Number of rules.
            Comparing different graph densities,
            shown as SOTA($x$) and BDG($x$) for density $x$.
            }
        \label{fig:automatic_splitting:fig:example_estimation}
\end{figure}

\begin{example}
In Figure~\ref{fig:automatic_splitting:fig:example_estimation} we show the estimated and actual number of instantiated rules for bottom-up grounding and BDG,
for $r_2$.
The behavior is analyzed on different graph densities (number of edges divided by edges of complete graph in percent)
and graph sizes (1 to 300 vertices).
The number of tuples $T(p_i)$ can be adequately
estimated for our example, so $\hat{T}(p_i) \approx T(p_i)$.
While for bottom-up grounding the estimated number of ground rules varies with  density,
it remains constant for BDG.
BDG's number of instantiated rules between a complete ($100\%$) and a sparse ($1\%$) graph 
remains relatively similar.
For bottom-up grounding, the number of instantiated rules varies.
\end{example}

\medskip 
\noindent Overall we obtain the following result on the grounding size by automated hybrid grounding.

\begin{theorem}
    \label{chpt:automatic-split:thm:grounding-size-data-struct-heur}
    Let \prog be a non-ground HCF program and
    $k$ be the maximum treewidth of any rule in $\prog$.
    Then, the grounding size of $\prog$,
    grounded with the markings \MarkerSet,
    $\prog_{\HybridGrounding} = \{r \mid r \in \prog, (r,\text{BDG}) \in \text{\MarkerSet} \}$ and
    $\prog_{\SOTAGrounding} = \{r \mid r \in \prog, (r,\text{SOTA}) \in \text{\MarkerSet}\}$,
    produced by Algorithm~\ref{alg:data-heur} and grounded
    by  $\HybridGrounding(\prog_{\HybridGrounding}, \prog_{\SOTAGrounding})$,
    is in
    $\mathcal{O}\left((|\prog| \cdot k) \cdot |\dom(\prog)|^{3 \cdot a} \right)$.
\end{theorem}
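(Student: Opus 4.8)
The plan is to bound the total grounding size as the sum of two essentially independent parts --- the bottom-up grounding of $\prog_{\SOTAGrounding}$ and the body-decoupled reduction $\HybridGrounding$ applied to $\prog_{\HybridGrounding}$ (the fixed interface rules linking the two add only $\mathcal{O}(|\prog|)$) --- and then to charge the size of each part to the individual rules according to the unique branch of Algorithm~\ref{alg:data-heur} that marks them. So I would run a case analysis over the branches of Algorithm~\ref{alg:data-heur}, bound each rule's contribution, and sum.

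For the body-decoupled part, a rule is marked $\text{BDG}$ only through Line~(\ref{chpt:aut-split:alg:data-heur-line-10}), (\ref{chpt:aut-split:alg:data-heur-line-12}), or (\ref{chpt:aut-split:alg:data-heur-line-14}), whose guards force $r$ to be a constraint, a rule in a tight part, or a general HCF rule, respectively. By the hybrid-grounding size bound recorded in Section~\ref{sec:preliminaries} --- the coefficient $c$ being $a$, $2a$, and $3a$ in these three cases --- such a rule contributes $\mathcal{O}(|r|\cdot|\dom(\prog)|^{3a})$ ground rules, and the once-only shared encoding of the global level mapping and its transitivity check is also $\mathcal{O}(|\dom(\prog)|^{3a})$; summing over the at most $|\prog|$ BDG-marked rules gives $\mathcal{O}(|\prog|\cdot|\dom(\prog)|^{3a})$ for this part.

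For the SOTA part I would split further by how $r$ was marked. Rules caught by the stratification test (Lines~(\ref{chpt:aut-split:alg:data-heur-line-2})--(\ref{chpt:aut-split:alg:data-heur-line-3})) are fully evaluated by bottom-up grounding, contributing $\mathcal{O}(|\prog|\cdot|\dom(\prog)|^{a})$ in total. A rule rewritten by \lpopt (Lines~(\ref{chpt:aut-split:alg:data-heur-line-4})--(\ref{chpt:aut-split:alg:data-heur-line-7})) is replaced, via a minimum-width tree decomposition, by at most $\mathcal{O}(|\var(r)|)$ new rules each with at most $\varphi_r\le k+1$ variables; after the recursive call $\textit{Heur}_{\text{struct}}$ re-classifies each new rule --- never again as a \lpopt target, since its bag equals its variable set --- it is grounded at cost $\mathcal{O}(|\dom(\prog)|^{\varphi_r})$, with the exponent thus controlled by $k$ and the rule count and rule sizes swept into the polynomial prefactor. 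A rule reaching the Else branch (Line~(\ref{chpt:aut-split:alg:data-heur-line-16})) is SOTA-grounded directly; since every guarded branch above it failed, either $\varphi_r\le 3a$ while \lpopt was skipped because $\varphi_r=|\var(r)|$ (so the naive bound $|\dom(\prog)|^{|\var(r)|}=|\dom(\prog)|^{\varphi_r}$ already lies within $|\dom(\prog)|^{3a}$), or the last guard failed on its second conjunct, i.e. $\hat T_{\bowtie}(r)\le\hat T_{\mathcal H}(r)$, and a term-by-term reading of Equations~(\ref{red-est:head})--(\ref{red-est:found-3}) bounds $\hat T_{\mathcal H}(r)$ by $\mathcal{O}(\poly(|r|)\cdot|\dom(\prog)|^{2a})$ (the dominant summand being $\hat T_{\mathcal H}^{F3}$, a sum of products of at most $2a$ domain sizes); reading $\hat T_{\bowtie}(r)$ as the true size of the bottom-up grounding of $r$ (as motivated around Equations~(\ref{eq:estimate-join-base})--(\ref{eq:estimate-join2}) and in Figure~\ref{fig:automatic_splitting:fig:example_estimation}) then gives a $\mathcal{O}(\poly(|r|)\cdot|\dom(\prog)|^{2a})$ bound. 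Summing the three sub-cases with the body-decoupled part and absorbing all polynomial factors into the prefactor $|\prog|\cdot k$ yields $\mathcal{O}((|\prog|\cdot k)\cdot|\dom(\prog)|^{3a})$.

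The main obstacle is the structural-rewriting branch together with its recursion: one must verify that applying \lpopt and re-running the heuristic on its output cannot push the grounding exponent past $3a$ --- i.e. that the $|\dom(\prog)|^{\varphi_r}$ terms, with $\varphi_r\le k+1$, are genuinely subsumed and that the recursion always terminates with each descendant marked $\text{SOTA}$ or $\text{BDG}$ with small local arity. Interleaved with this is the need for the join-selectivity estimates of Equations~(\ref{eq:estimate-join-base})--(\ref{eq:estimate-join2}) to faithfully track the actual bottom-up grounding size, so that a rule reaching the Else branch via "$\hat T_{\mathcal H}(r)\ge\hat T_{\bowtie}(r)$" is indeed no more expensive, asymptotically, than the body-decoupled alternative it was compared against. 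The remaining pieces --- the body-decoupled branches and the stratified branch --- follow directly from the size statements already established in Section~\ref{sec:preliminaries}.
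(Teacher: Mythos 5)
Your overall strategy coincides with the paper's: a branch-by-branch case analysis of Algorithm~\ref{alg:data-heur} in which the structural guards ($a<\varphi_r$, $2a<\varphi_r$, $3a<\varphi_r$, and the stratification test) deliver the exponent $3a$, with the data-driven comparisons handled as a separate concern. The paper's own argument in the main text is only a sketch that says exactly this ("structural parts of the algorithm bound the grounding size\dots we are left to prove that this still holds when incorporating data-awareness, which holds on dense instances"), deferring the details to an appendix. However, the two points you yourself flag as "the main obstacle" are precisely where your write-up has genuine gaps rather than proofs. First, in the \lpopt branch you assert that the fresh rules are "grounded at cost $\mathcal{O}(|\dom(\prog)|^{\varphi_r})$, with the exponent thus controlled by $k$ and\dots swept into the polynomial prefactor." That cannot be right as stated: $|\dom(\prog)|^{\varphi_r}$ with $\varphi_r\le k+1$ is exponential in $k$, and a multiplicative prefactor of $k$ in the claimed bound $\mathcal{O}((|\prog|\cdot k)\cdot|\dom(\prog)|^{3a})$ does not absorb it. You need an explicit argument relating $\varphi_r$ (equivalently, the arity of the fresh predicates introduced by \lpopt) to $3a$, or an accounting of the rewritten rules under the program's maximum arity after rewriting; as written, a rule with $k+1>3a$ breaks your bookkeeping.

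Second, for the Else branch reached because $\hat T_{\bowtie}(r)\le\hat T_{\mathcal H}(r)$, your bound rests on "reading $\hat T_{\bowtie}(r)$ as the true size of the bottom-up grounding of $r$." Equations~(\ref{eq:estimate-join-base})--(\ref{eq:estimate-join2}) are join-\emph{selectivity} estimates: the division by $\prod_{X}|\dom(X,r)|$ assumes uniformly distributed join keys and can underestimate the true join size on skewed data, so $\hat T_{\bowtie}(r)\le\hat T_{\mathcal H}(r)$ does not by itself imply that the actual SOTA grounding of $r$ is $\mathcal{O}(|\dom(\prog)|^{3a})$. This is exactly the caveat the paper's proof idea acknowledges by restricting the data-aware part of the claim to dense instances; a complete proof must either establish that $\hat T_{\bowtie}$ upper-bounds the true instantiation count, or make the density assumption explicit. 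Until these two steps are supplied, the proposal is an outline of the right shape rather than a proof.
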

\begin{proof}[Proof (idea)]
Intuitively, structural parts of the algorithm bound the grounding size to
$\mathcal{O}\left((|\prog| \cdot k) \cdot |\dom(\prog)|^{3 \cdot a} \right)$.
We are left to prove that this still holds when incorporating data-awareness,
which holds on dense instances.
The proof is detailed in the appendix.
\end{proof}

\section{Prototype Implementation \ngrThree}
\label{sec:novel-prototype}

Our prototype \ngrThree\footnote{Prototype available under \url{https://github.com/alexl4123/newground}.} is a full-fledged grounder that combines bottom-up with body-decoupled grounding.
It incorporates BDG into the bottom-up procedure, where we decide according to the data-structural heuristics 
(Algorithm~\ref{alg:data-heur})
whether to use BDG or not.
Furthermore, the algorithm does not pre-impose on the user which SOTA grounder to use,
and therefore, offers integration with \gringo and \idlv.
In this section, we discuss implementation choices, highlight implementation challenges,
and present the structure of the prototype.

%
%
We performed a full-scale redevelopment of the earlier versions of \ngrThree (\ngr and \nag),
where on a high level, semi-naive grounding is interleaved with body-decoupled grounding.
We further extended its input language to the ASP-Core-2~\cite{calimeri_asp-core-2_2020} input language standard\footnote{
    Currently not all ASP-Core-2 constructs are supported with BDG rewritings. Checks ensure that only 
    supported constructs are considered to be grounded by BDG, while non-groundable ones are grounded by SOTA-techniques.
    }
and improved the grounding performance of \ngr.
For the semi-naive grounding parts we use either \gringo, or \idlv,
whereas, for the BDG part we use a completely redesigned BDG-instantiator.
To improve performance even further, we combine Python with Cython and C code.

\smallskip
\textbf{Architectural Overview}.
\begin{figure}[t]
    \centering
    \includegraphics[width=8cm]{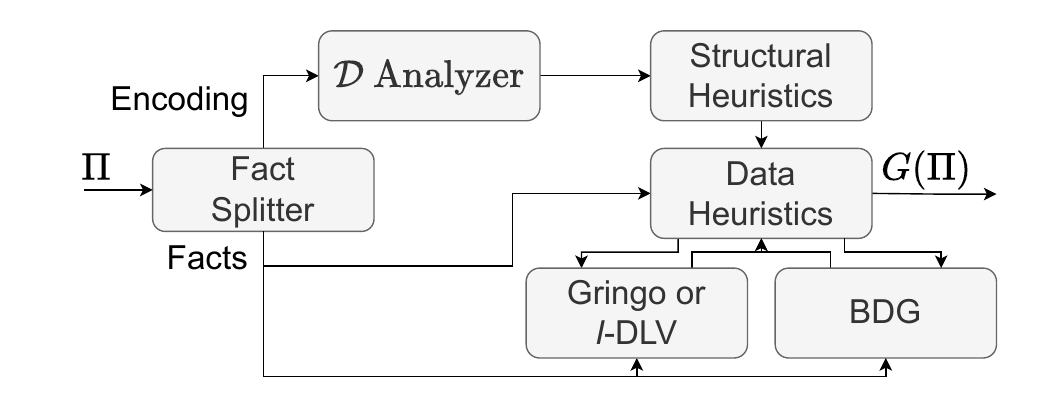}
\vspace{-1.25em}
    \caption{
        Schematics of the software architecture of the \ngrThree prototype.
    }\label{chpt:automatic_splitting:fig:prototype-schematics}
\end{figure}
The general architecture of the prototype consists of $4$ parts, where we show a schematics in Figure~\ref{chpt:automatic_splitting:fig:prototype-schematics}.
Given a program $\prog$, the \textit{fact splitter and analyzer} (Fact Splitter) written in Cython, separates facts from the encoding. 
It further computes the number of facts, and fact-domain.
This enables an efficient computation of the \textit{positive dependency graph and analysis thereof} ($\mathcal{D}$ Analyzer).
Based on these results the \textit{structural heuristics} decides which rules are eligible for grounding with BDG.
If no rules are structurally eligible for grounding with BDG then the program is grounded by either \gringo or \idlv.
Otherwise, the bottom-up procedure is \emph{emulated} and for each strongly connected component in the positive dependency graph, where
at least one rule is structurally eligible for grounding with BDG,
the data heuristics decides whether to ground the rule with BDG or with a SOTA-approach.

In the development of the prototype we encountered two major challenges:
(i) integration and communication with \gringo and \idlv, and
(ii) suitable domain inference for grounding size estimations of Algorithm~\ref{alg:data-heur}.
To address these, we split the data-structural heuristics into two parts in our implementation:
first, the structural heuristics decides, which parts are eligible for grounding with BDG
and only then the estimation of the size of the instantiation of the eligible rules is performed.
Further, we minimize the number of interactions with \gringo and \idlv,
as each call to a SOTA-grounder is expensive and should better be avoided.
Therefore, we do not infer the domain if the result of the structural heuristics states that BDG should not be used.
The emulation is necessary, as neither \gringo nor \idlv provides callback functions which let us implement our heuristics directly.
In the future a direct implementation of the heuristics in a SOTA grounder would render these calls unnecessary and would improve performance even further.

\section{Experiments}
\label{sec:automatic-splitting:experiments}

In the following, we demonstrate the practical usefulness of our automated hybrid grounding approach.
We benchmark solving-heavy and grounding-heavy instances, aiming at SOTA-like results on solving-heavy benchmarks,
and beating SOTA results on grounding-heavy benchmarks.

\smallskip
\textbf{Benchmark System}.
We compared \gringo (Version 5.7.1), \idlv (1.1.6), \ProASP (Git branch \textit{master}, short commit hash \textit{2b42af8}), \AlphaG (Version 0.7.0), and our hybrid grounding system \ngrThree.
We benchmarked \ngrThree with both \gringo, and \idlv.
Further, we investigated the impact of using our system in combination with \lpopt (Version 2.2).
We chose \clingo (Version 5.7.1) with \clasp (3.3.10) for solving.
However, in principle, one could also use \dlv with \wasp,
or use heuristics to determine the solver of choice~\cite{calimeri_efficiently_2020}.
For \ngrThree we use Python version 3.12.1.
Our system has \textit{225 GB} of RAM,
and an \textit{AMD Opteron 6272} CPU,
with 16 cores, powered by \textit{Debian 10} OS with kernel 4.19.0-16-amd64.

\smallskip
\textbf{Benchmark Setup}.
For all experiments and systems, we measure \textit{total time},
which includes grounding and solving time for ground-and-solve systems,
or execution time for \AlphaG and \ProASP.
Further, we measure RAM usage for all systems and experiments.
For the ground-and-solve systems we measured grounding performance (grounding time, grounding size, and RAM usage)
in a separate run.
Every experiment has a timeout of \textit{1800s} and a RAM (and grounding-size) limit of \textit{10 GB}.
For integrated grounders and solvers (\AlphaG and \ProASP) this RAM limit applies to their execution.
For ground-and-solve systems this applies to grounding and solving.

We consider instances as a \textit{TIMEOUT} whenever they take longer than \textit{1800s},
and a \textit{MEMOUT} when their RAM usage exceeds \textit{10 GB}.
We set seeds for clingo (\textit{11904657}),
and
for \lpopt (\textit{11904657}).
Further, for all generated graph instances for the grounding-heavy experiments
we generated random seeds that we saved inside the random instance
as a predicate.

%
\begin{figure}[t]
        \hspace{-.3em}\begin{subfigure}[b]{0.48\textwidth}
            \centering
            \includegraphics[width=6.8cm]{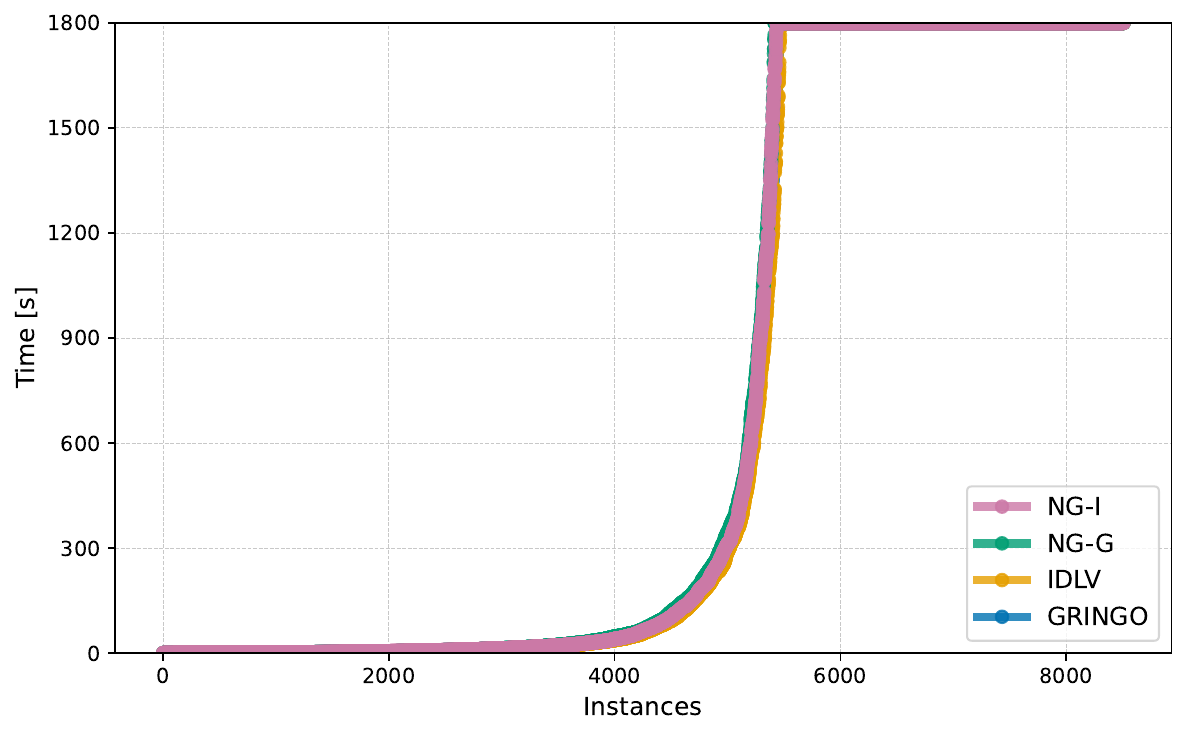}\vspace{-.5em}
            \caption{
                Solving-heavy: Ground \& Solve Time~[s].
            }
            \label{chpt:auto:subfig:solving-heavy-cactus-tt}
        \end{subfigure}
        \hspace{0.1cm}
        \begin{subfigure}[b]{0.48\textwidth}
            \centering
            \includegraphics[width=6.8cm]{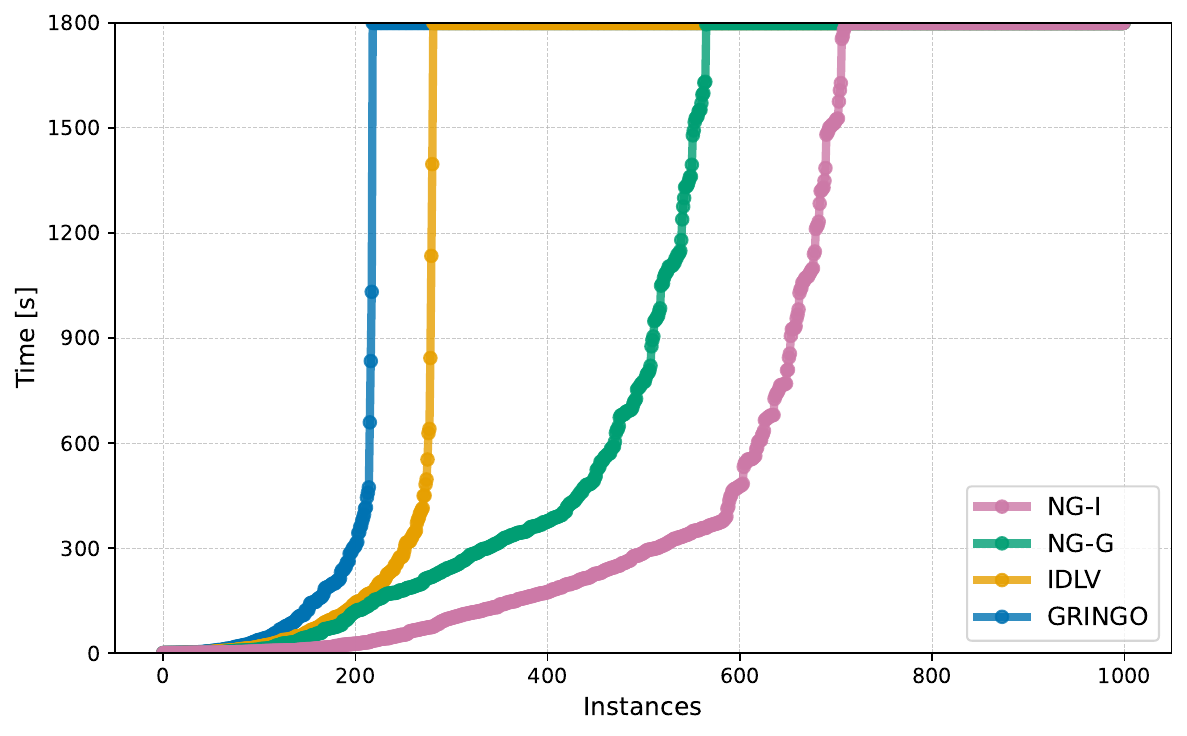}\vspace{-.5em}
            \caption{
                Grounding-heavy: Ground \& Solve~Time~[s].
            }
            \label{chpt:auto:subfig:grounding-heavy-cactus-tt}
        \end{subfigure}

        \hspace{-.3em}\begin{subfigure}[b]{0.48\textwidth}
            \centering
            \includegraphics[width=6.8cm]{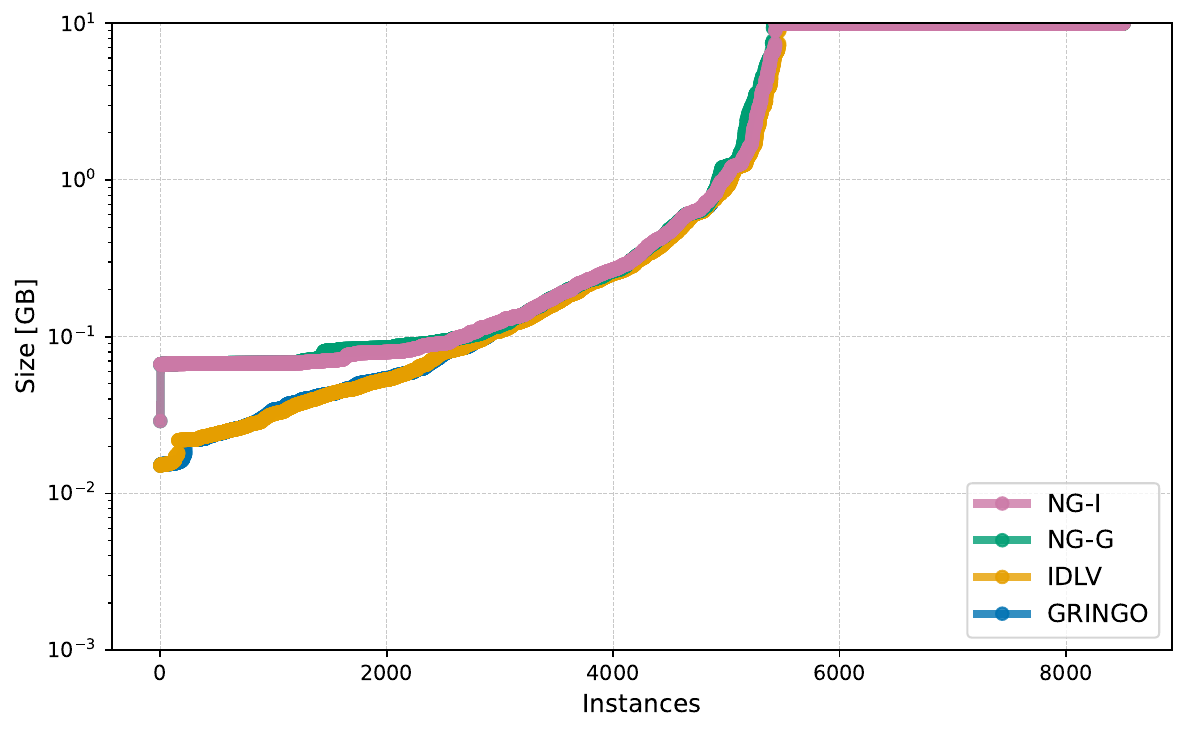}\vspace{-.5em}
            \caption{
                Solving-heavy: Max RAM Usage [GB].
            }
            \label{chpt:auto:subfig:solving-heavy-cactus-ru}
        \end{subfigure}
        \hspace{0.1cm}
        \begin{subfigure}[b]{0.48\textwidth}
            \centering
            \includegraphics[width=6.8cm]{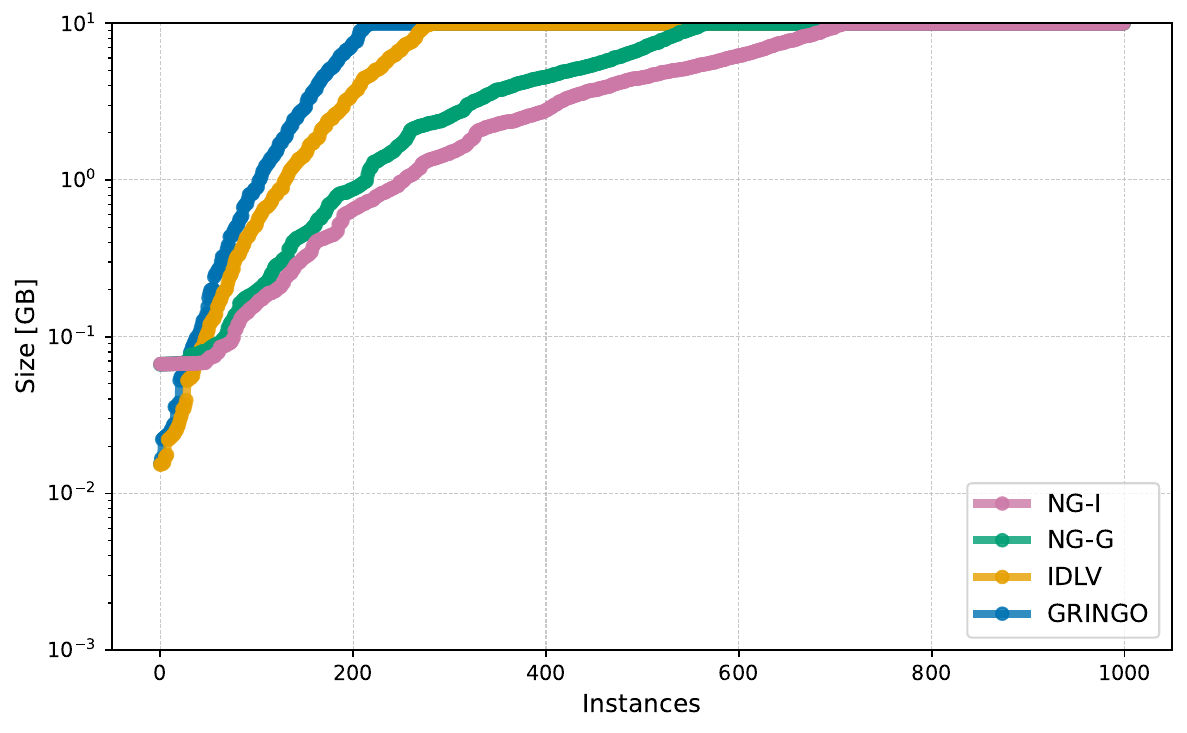}\vspace{-.5em}
            \caption{
                Grounding-heavy: Max RAM Usage [GB].
            }
            \label{chpt:auto:subfig:grounding-heavy-cactus-ru}
        \end{subfigure}
\vspace{-.75em}
    \caption{
    Solving-heavy
        (Figures~\ref{chpt:auto:subfig:solving-heavy-cactus-tt}, and~\ref{chpt:auto:subfig:solving-heavy-cactus-ru})
        and grounding-heavy (Figures~\ref{chpt:auto:subfig:grounding-heavy-cactus-tt}, and~\ref{chpt:auto:subfig:grounding-heavy-cactus-ru}) experiments.
        x-axis: Instances; y-axis: Time [s] or Size [GB].
        Measured \idlv, \gringo,
        \ngrThree with \gringo (\NgrThreeG), and
       \ngrThree with \idlv (\NgrThreeI).
        Timeout: 1800s; Memout: 10GB.
    } 
    \label{chpt:auto:fig:solving-heavy-cactus}
\end{figure}

\vspace{-1em}
\subsection{Experiment Scenarios and Instances}

We distinguish between \textit{solving}- and \textit{grounding}-heavy benchmarks.
For the solving-heavy benchmarks we compare \idlv, \gringo,
\ngrThree with \gringo (\NgrThreeG),
\ngrThree with \idlv (\NgrThreeI),
\AlphaG, and
\ProASP (ground-all).
%
For the grounding-heavy benchmarks we compare grounders \idlv, \gringo,
\ngrThree with \gringo,
\ngrThree with \idlv,
\AlphaG,
\ProASP (ground-all), 
and
\ProASP with compiling constraints (\ProASP\texttt{-CS}).

\smallskip
\textbf{Solving-Heavy Benchmarks}.
The solving-heavy benchmarks are taken from the 2014 ASP-Competition~\cite{calimeri_design_2016},
as they provide a large instance set with readily available efficient encodings.
The 2014 ASP-Competition has 25 competition scenarios,
where each (with the exception of \textit{Strategic-Companies}) has two encodings,
resulting in 49 competition scenarios.
Each scenario has a different number of instances.
We benchmarked all instances over all scenarios.
Further, we preprocessed the encodings s.t. no predicates occur,
which have the same predicate name, but differing arity.

We show the encoding of
problem \textit{MaximalCliqueProblem} (2014 encoding)\footnote{
    The whole competition suite can be found at: \url{https://www.mat.unical.it/aspcomp2014/FrontPage}
} as an example:
\begin{lstlisting}
clique(X) :- node(X), not nonClique(X).
nonClique(X) :- node(X), not clique(X).
$\;\;$:- clique(X), clique(Y), X < Y, not edge(X,Y), not edge(Y,X).
:$\sim$ nonClique(X). [1,X]
\end{lstlisting}

Intuitively the encoding guesses nodes that are part of the maximal clique (Lines~1,2).
If there is a missing edge between a pair of nodes, then it is not a clique (Line~3).
We 
minimize the number of non-clique nodes (Line~4).

\smallskip\textbf{Grounding-Heavy Benchmarks}.
We take grounding-heavy benchmarks from the BDG experiments~\cite{besin_body-decoupled_2022}
and from the hybrid grounding experiments~\cite{beiser_bypassing_2024}.
These scenarios take a graph as an input,
where we generate random graphs ranging from instance size $100$ to $2000$ with a step-size of $100$ for the BDG scenarios~\cite{besin_body-decoupled_2022}
and random graphs ranging from instance size $20$ to $400$ with a step-size of $20$ for the hybrid grounding scenarios~\cite{beiser_bypassing_2024}.
For both, we use graph density levels ranging from $20\%$ to $100\%$.

Further, we adapt the benchmarks from~\cite{besin_body-decoupled_2022} by adding two variations of the \emph{3-Clique} benchmark.
The variations resemble different difficulties for BDG and SOTA grounders.
The first listing (3-Clique-not-equal) shows the original formulation 
from~\cite{besin_body-decoupled_2022},
and the second one (3-Clique) depicts the adaptation that makes it easier for SOTA grounders 
 by changing ``$\neq$'' to ``$<$''.
\begin{lstlisting}
{f(X,Y)} :- edge(X,Y).
$\;\;$:- f(A,B), f(A,C), f(B,C), A != B, B != C, A != C.
\end{lstlisting}
\begin{lstlisting}
{f(X,Y)} :- edge(X,Y).
$\;\;$:- f(A,B), f(A,C), f(B,C), A < B, B < C, A < C.
\end{lstlisting}

\begin{figure}[t]
        \hspace{-.5em}
        \begin{subfigure}[b]{0.51\textwidth}
            \centering
            \includegraphics[width=6.7cm]{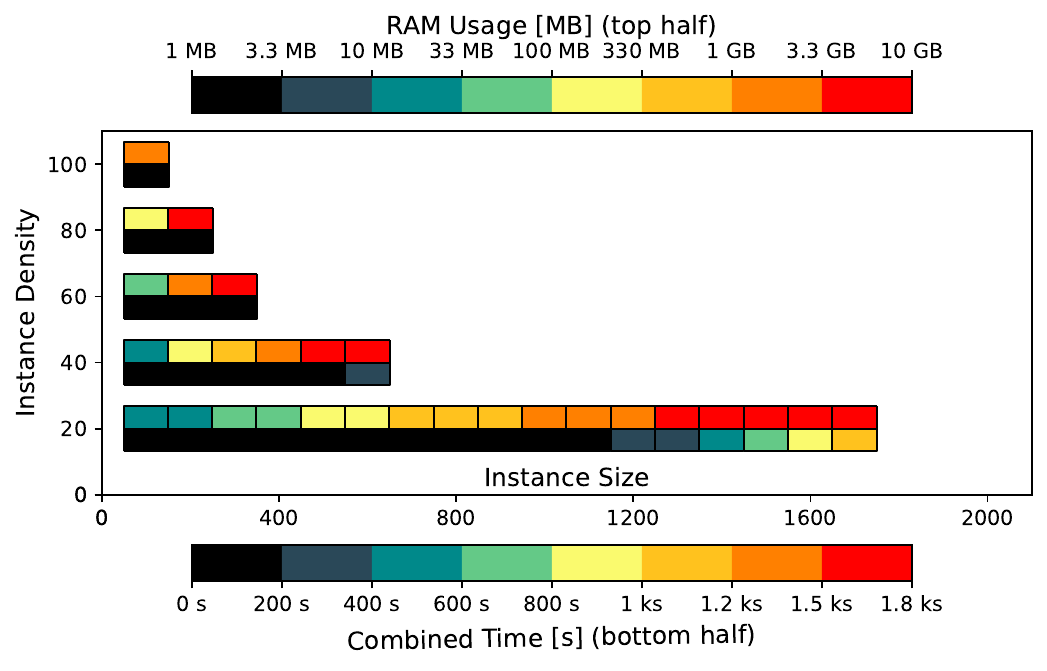}
        \end{subfigure}
         \begin{subfigure}[b]{0.48\textwidth}
            \centering
             \includegraphics[width=6.7cm]{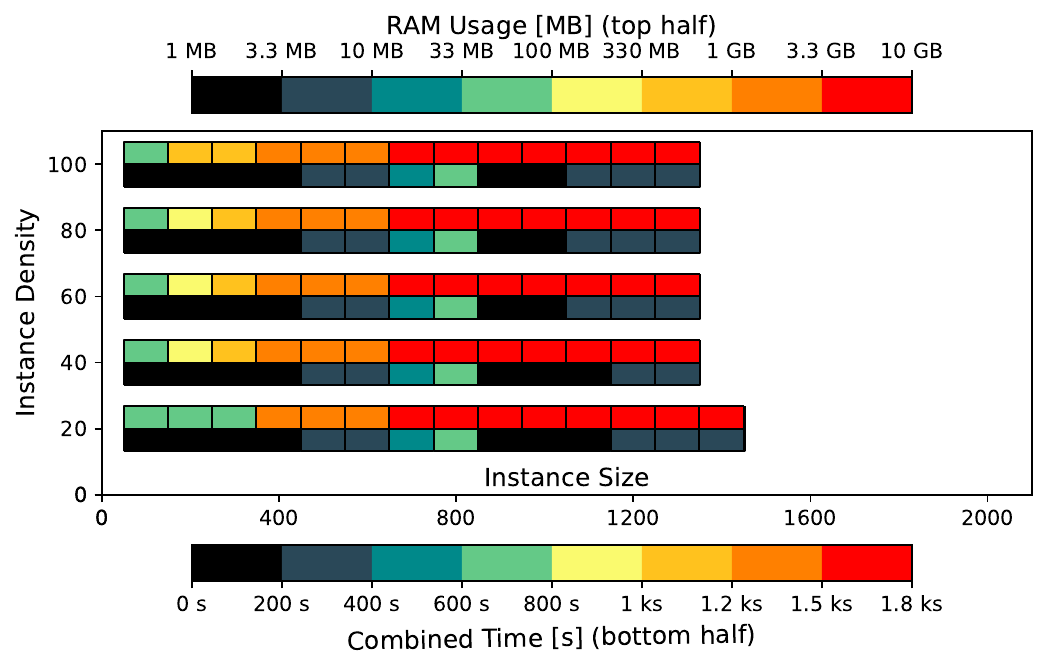}
        \end{subfigure}
\vspace{-1.1em}
        \caption{
        Solving profiles for grounding-heavy scenario \textit{4-Clique}
        for \gringo (left) and \ngrThree with \gringo (\NgrThreeG).
        One rectangle represents one grounded and solved instance.
            Timeout: 1800s; Memout: 10GB. Instance size on x-axis, instance density on y-axis.
            }
            \label{fig:solving-profile-33}
\end{figure}

The adapted\footnote{
\ProASP's syntax currently does not support choice rules,
so we adapted the subgraph encoding for \ProASP with a negative cycle encoding (\textit{f(X,Y) :- edge(X,Y), not nf(X,Y). nf(X,Y) :- edge(X,Y), not f(X,Y).}). This is also used for \AlphaG.
}
scenarios from~\cite{besin_body-decoupled_2022} are called as follows:
3-Clique, 3-Clique-not-equal, directed-Path, directed-Col, 4-Clique, NPRC.
The examples S3T4, S4T6, NPRC-AGG, and SM-AGG, are from~\cite{beiser_bypassing_2024}.

\subsection{Experimental Hypotheses}
\label{chpt:automated-hg:experimental-hypotheses}

\begin{itemize}
    \item[H1] The Data-Structural-Heuristics (Algorithm~\ref{alg:data-heur}) implemented in our prototype \ngrThree
    approaches other SOTA ground-and-solve system's performance on solving-heavy benchmarks.
    \item[H2] Data-Structural-Heuristics of \ngrThree
        yields an improvement in performance (solved instances) on grounding-heavy benchmarks,
        in contrast to other SOTA systems.
\end{itemize}

\subsection{Experimental Results and Discussion}
\label{chpt:automated-hg:experimental-results}

We show an overview of our results in Table~\ref{tbl:summary-table} and Figure~\ref{chpt:auto:fig:solving-heavy-cactus}; a detailed solving profile of the grounding-heavy scenario \emph{4-Clique}\ is given in Figure~\ref{fig:solving-profile-33}.
For details, see supplementary material.

\begin{table}[t]
\scalebox{.85}{
\begin{tabular}[t]{c|c|ccc|ccc|ccc}
\multirow{1}{*}{Instance Summary} & \multirow{1}{*}{\#I} & \multicolumn{9}{c}{Total \#{Solved}} \\ 
\toprule
& & \multicolumn{9}{c}{\textbf{Grounding-Heavy Scenarios}} \\ \cmidrule(lr){3-11}
 &  & \multicolumn{3}{c}{\gringo} & \multicolumn{3}{c}{\idlv} & \multicolumn{3}{c}{\NgrThreeG} \\ \cmidrule(lr){3-11} 
& & \#S & M & T & \#S & M & T & \#S & M & T  \\ 
\cmidrule(lr){3-11}
All & 1000 & 218 & 169 & 613 & 281 & 198 & 521 & 566 & 336 & 98 \\ 
ProASP & 500 & 149 & 97 & 254 & 158 & 102 & 240 & 280 & 210 & 10 \\ \midrule 
& & \multicolumn{3}{c}{\NgrThreeI} & \multicolumn{3}{c}{\AlphaG} & \multicolumn{3}{c}{\ProASP\texttt{-CS}}\\ \cmidrule(lr){3-11}
& & \#S & M & T & \#S & M & T & \#S & M & T  \\ \cmidrule(lr){3-11}
All & 1000 & \textbf{710} & 247 & 43 & - & - & - & - & - & - \\ 
ProASP & 500 & 288 & 198 & 14 & 147 & 177 & 176 & \textbf{389} & 81 & 30 \\ \midrule \midrule
& & \multicolumn{9}{c}{\textbf{Solving-Heavy Scenarios}} \\ \cmidrule(lr){3-11}
 &  & \multicolumn{3}{c}{\gringo} & \multicolumn{3}{c}{\idlv} & \multicolumn{3}{c}{\NgrThreeG} \\ \cmidrule(lr){3-11} 
& & \#S & M & T & \#S & M & T & \#S & M & T \\ \cmidrule(lr){3-11} 
All & 8509 & 5449 & 650 & 2410 & \textbf{5469} & 697 & 2343 & 5418 & 524 & 2567 \\ 
Alpha & 1640 & 1255 & 30 & 355 & \textbf{1280} & 0 & 360 & 1251 & 24 & 365 \\ 
ProASP & 320 & 308 & 0 & 12 & 308 & 0 & 12 & 307 & 0 & 13 \\ \midrule 
& & \multicolumn{3}{c}{\NgrThreeI} & \multicolumn{3}{c}{\AlphaG} & \multicolumn{3}{c}{\ProASP}\\ \cmidrule(lr){3-11}
& & \#S & M & T & \#S & M & T & \#S & M & T  \\ \cmidrule(lr){3-11}
All & 8509 & 5434 & 599 & 2476 & - & - & - & - & - & - \\
Alpha & 1640 & 1272 & 0 & 368 & 183 & 290 & 1167 & - & - & - \\
ProASP & 320 & 306 & 0 & 14 & 3 & 53 & 264 & \textbf{311} & 0 & 9 \\ \midrule
\end{tabular}}
\vspace{-1.1em}
\caption{
Experimental results showing all scenarios, those executable by Alpha, and those executable by ProASP, with differing number of instances (\#I).
We depict solved instances (\#S), Memouts (M), and Timeouts (T) for \gringo, \idlv, \NgrThreeG, \NgrThreeI, \AlphaG, and \ProASP.
}
\label{tbl:summary-table}
\end{table}

\textbf{Discussion of H1}.
To confirm H1, we focus our attention on the results of the solving-heavy experiments.
These are displayed in Figures~\ref{chpt:auto:subfig:solving-heavy-cactus-tt} and~\ref{chpt:auto:subfig:solving-heavy-cactus-ru} 
and in the lower half of the Table~\ref{tbl:summary-table}.
The figures show that that \ngrThree's performance is approximately the same as the other ground-and-solve approaches.
The detailed results of the table show that the overall number of solved instances for \gringo is $5449$,
for \idlv $5469$, for \NgrThreeG $5418$, and for \NgrThreeI $5434$.
The difference between \gringo and \NgrThreeG are $31$ instances, 
and for \idlv and \NgrThreeI are $35$ instances.
On in total $8509$ solving-heavy instances this resembles an approximate relative
difference of
$0.36\%$ for \gringo vs. \NgrThreeG and
$0.41\%$ for \idlv vs. \NgrThreeI.
The detailed results show that for \gringo vs. \NgrThreeG there are cases where \gringo beats \NgrThreeG and cases where \NgrThreeG beats \gringo.
The same holds for \idlv vs. \NgrThreeI.
As the differences of solved instances between \ngrThree and the respective state-of-the-art grounders are minor, we confirm H1. 

\smallskip
\textbf{Discussion of H2}.
We compare the results for the grounding-heavy scenarios of Figures~\ref{chpt:auto:subfig:grounding-heavy-cactus-tt} and~\ref{chpt:auto:subfig:grounding-heavy-cactus-ru}, and the upper half of Table~\ref{tbl:summary-table}.
While \gringo solves $218$, and \idlv $281$, \ngrThree solves $566$ in the \NgrThreeG and $710$ in the \NgrThreeI configuration,
from a total of $1000$ instances.
This is a difference of $34.8\%$ and $42.9\%$, respectively.
Also observe the milder increase in RAM usage in Figure~\ref{chpt:auto:subfig:grounding-heavy-cactus-ru}
and the ability to ground denser instances (Figure~\ref{fig:solving-profile-33}).
As \ngrThree's ability to automatically determine when to use BDG leads to an approximate doubling
in the number of solved grounding-heavy instances, we can confirm H2.

\subsubsection*{Summary of Results}

For both solving-heavy and grounding-heavy benchmarks \NgrThreeG and \NgrThreeI outperformed \AlphaG significantly.
\ProASP has a comparable performance on solving-heavy benchmarks.
On grounding-heavy benchmarks, \ProASP shows promising results,
however only when we use \ProASP in the compile constraints mode.
In the ground-all mode its behavior is similar to \gringo or \idlv.
This confirms the results of previous studies about the performance of \ProASP~\cite{dodaro_blending_2024}.
Although the results of \ProASP are very promising, it is only usable for a small fragment of the scenarios.
%
%
%
%
%
%

\section{Conclusion}
\label{sec:discussion_conclusion}

The advancement of alternative grounding procedures is an important step towards solving the grounding bottleneck.
Previous results for the newly introduced body-decoupled grounding (BDG) method~\cite{besin_body-decoupled_2022} showed improvements on grounding-heavy tasks.
Hybrid grounding~\cite{beiser_bypassing_2024} 
enables manual partitioning of a program into a part grounded by standard grounders and a part grounded by BDG.
However, due to the challenging predictability of BDG's solving performance,
it remained unclear when the usage of BDG is useful.

In this paper, we state a data-structural heuristics,
which decides when it is beneficial to use BDG.
Our heuristics decision is based on
the structure of a rule and the data of the instance.
For each rule a minimum tree decomposition of the rule's variable graph is computed
and compared to the maximum arity of the rule.
Whenever the bag size of the minimum tree decomposition is smaller, the rule is grounded with bottom-up grounding.
Otherwise the grounding size of the rule is estimated for bottom-up grounding by methods from databases,
which is compared to an estimate of BDG's grounding size.
Whichever is smaller is chosen for grounding.
Our prototype \ngrThree implements this heuristics by emulating a bottom-up procedure.
The results of our experiments show that we approach bottom-up grounders number of solved instances for solving-heavy scenarios,
while we approximately double the number of solved instances for grounding-heavy scenarios.
We think that this is an important step towards integrating BDG into state-of-the-art grounders.
However, there is still future work to be explored for BDG.
We argue that near-term research should include improvements of BDG for high-arity programs, as well as for syntactic extensions, highly cyclic rules, large HCF rules, and disjunctive programs.

\subsection*{Acknowledgments}
This research was funded in part by the Austrian Science Fund (FWF), grants 10.55776/COE12 and J 4656.
This research was supported by Frequentis.

\bibliographystyle{acmtrans}


\bibliography{paper}

\end{document}